
\documentclass[sigplan,screen]{acmart}

\usepackage{tikz}
\usetikzlibrary{
  cd,
  math,
  decorations.markings,
  decorations.pathreplacing,
  positioning,
  arrows.meta,
  circuits.logic.US,
  shapes,
  calc,
  fit,
  trees,
  quotes}
\usetikzlibrary{decorations.pathmorphing}
\usetikzlibrary{decorations.markings}
\usetikzlibrary{decorations.pathreplacing}
\usetikzlibrary{arrows}
\usetikzlibrary{shapes.geometric}
\usepackage{tikzit}
\usepackage{tikz-cd}



\usepackage{mathtools}
\usepackage{amsthm}
\usepackage{amsmath}
\usepackage[utf8]{inputenc}

\usepackage{hyperref}
\usepackage{pdfpages}
\usepackage{float}

\usepackage{dsfont}
\usepackage{relsize}
\usepackage{enumitem}

\usepackage{bibentry}







\newcommand{\ab}{\mathbf{A}}
\newcommand{\bb}{\mathbf{B}}
\newcommand{\cb}{\mathbf{C}}
\newcommand{\db}{\mathbf{D}}
\newcommand{\tb}{\mathbf{T}}
\newcommand{\ib}{\mathbf{I}}

\newcommand{\clq}{C_{\leq_{c}}}

\newcommand{\dab}{\mathbf{D_A}}

\newcommand{\dbb}{\mathbf{D_B}}
\newcommand{\dcb}{\mathbf{D_C}}

\newcommand{\fab}{\mathbf{F_A}}
\newcommand{\false}{\text{false}}
\newcommand{\fbb}{\mathbf{F_B}}
\newcommand{\fcb}{\mathbf{F_C}}

\newcommand{\lqs}{\leq_{S}}

\newcommand{\lgk}{Lan_{G}K}
\newcommand{\lgkl}{Lan_{G}K_L}

\newcommand{\metid}{\met_{id}}

\newcommand{\met}{\mathcal{\mathbf{Met}}}

\newcommand{\prtid}{\prt_{id}}

\newcommand{\prt}{\mathbf{Part}}

\newcommand{\px}{\mathcal{P}_X}

\newcommand{\py}{\mathcal{P}_Y}

\newcommand{\rgk}{Ran_{G}K}
\newcommand{\rgkr}{Ran_{G}K_R}

\newcommand{\rlp}{\rl_{\geq 0}}
\newcommand{\rl}{\mathbb{R}}

\newcommand{\set}{\mathbf{Set}}
\newcommand{\sofc}{S^1_{f_c}}
\newcommand{\sof}{S^1_{f}}

\newcommand{\stf}{S^2_{f}}

\newcommand{\true}{\text{true}}

\newcommand{\xrefined}{(X, \{\{x\}\ |\ x \in X\})}




\AtBeginDocument{%
  \providecommand\BibTeX{{%
    \normalfont B\kern-0.5em{\scshape i\kern-0.25em b}\kern-0.8em\TeX}}}

\setcopyright{none}
\copyrightyear{2022}
\acmYear{2022}


\acmConference[...]{...}{...}{...}
\acmBooktitle{...}



\begin{document}
\title{Kan Extensions in Data Science and Machine Learning}
\author{Dan Shiebler}
\email{danshiebler@gmail.com}
\affiliation{
  \institution{Kellogg College, University of Oxford}
  \country{United Kingdom}
}


\begin{abstract}
A common problem in data science is ``use this function defined over this small set to generate predictions over that larger set.'' Extrapolation, interpolation, statistical inference and forecasting all reduce to this problem. The Kan extension is a powerful tool in category theory that generalizes this notion. In this work we explore several applications of Kan extensions to data science. We begin by deriving a simple classification algorithm as a Kan extension and experimenting with this algorithm on real data. Next, we use the Kan extension to derive a procedure for learning clustering algorithms from labels and explore the performance of this procedure on real data. We then investigate how Kan extensions can be used to learn a general mapping from datasets of labeled examples to functions and to approximate a complex function with a simpler one.

\end{abstract}

\keywords{Kan Extension, Data Science, Machine Learning, Generalization, Extrapolation}
\maketitle

\section{Introduction}

A popular slogan in category theoretic circles, popularized by Saunders Mac Lane, is: ``all concepts are Kan extensions'' \cite{maclane:71}. While Mac Lane was partially referring to the fundamental way in which many elementary category theoretic structures (e.g. limits, adjoint functors, initial/terminal objects) can be formulated as Kan extensions, there are many applied areas that have Kan extension structure lying beneath the surface as well. 

One such area is data science/machine learning. It is quite common in data science to apply a model constructed from one dataset to another dataset. In this work we explore this structure through several applications.

Our contributions in this paper are as follows:
\begin{itemize}
    \item We derive a simple classification algorithm as a Kan extension and demonstrate experimentally that this algorithm can learn to classify images.
    \item We use Kan extensions to derive a novel method for learning a clustering algorithm from labeled data and demonstrate experimentally that this method can learn to cluster images. 
    \item We explore the structure of meta-supervised learning and use Kan extensions to derive supervised learning algorithms from sets of labeled datasets and trained functions.
    \item We use Kan extensions to characterize the process of approximating a complex function with a simpler minimum description length (MDL) function.
\end{itemize}
The code that we use in this paper be found at:
\begin{center}
https://anonymous.4open.science/r/Kan\_Extensions-4C21/. 
\end{center}

\subsection{Related Work}


While many authors have explored how an applied category theoretic perspective can help exploit structure and invariance in machine learning \cite{shieblersurvey2021}, relatively few authors have explored applications of Kan extensions to data science and machine learning. 

That said, some authors have begun to explore Kan extension structure in topological data analysis. For example, Bubenik et al. \cite{bubenikinterpolation} describe how three mechanisms for interpolating between persistence modules can be characterized as the left Kan extension, right Kan extension, and natural map from left to right Kan extension. Similarly, McCleary et. al. \cite{mccleary2021edit} use Kan extensions to characterize deformations of filtrations. Furthermore, Botnan et. al. \cite{Botnan_2018} use Kan extensions to generalize stability results from block decomposable persistence modules to zigzag persistence modules and Curry \cite{Curry13sheaves} use Kan extensions to characterize persistent structures from the perspective of sheaf theory.

Other authors have explored the application of Kan extensions to databases. For example, in categorical formulations of relational database theory \cite{spivak2015relational, schultz2017algebraic,  schultz2016algebraic}, the left Kan extension can be used for data migration. Spivak et. al. \cite{spivak2020fast} exploit the characterization of data migrations as Kan extensions to apply the chase algorithm from relational database theory to the general computation of the left Kan extension.

Outside of data science, many authors have applied Kan extensions to the study of programs. For example, Hinze et. al. \cite{DBLP:conf/mpc/Hinze12} use Kan extensions to express the process of optimizing a program by converting it to a continuation-passing style.  Similarly, Yonofsky \cite{yanofsky2013kolmogorov} uses Kan extensions to reason about the output of programs expressed in a categorical programming language. Many authors have also used Kan extensions to reason about operations on the category of Haskell types and functions \cite{paterson2012constructing}.

\subsection{Background on Kan extensions}\label{kanextensionsbackground}

In this work we assume readers have a basic familiarity with category theory. For a detailed introduction to the field check out ``Basic Category Theory'' \cite{leinster2016basic} or ``Seven Sketches in Compositionality'' \cite{fong2018seven}. 

Suppose we have three categories $\ab, \bb, \cb$ and two functors $G: \ab \rightarrow \bb, K: \ab \rightarrow \cb$ and we would like to derive the ``best'' functor $F: \bb \rightarrow \cb$. There are two canonical ways that we can do this:
\begin{definition}
The left Kan extension of $K$ along $G$ is the universal pair of the functor $\lgk: \bb \rightarrow \cb$ and natural transformation $\mu: K \rightarrow (\lgk \circ G)$ such that for any pair of a functor $M: \bb \rightarrow \cb$ and natural transformation $\lambda: K \rightarrow (M \circ G)$ there exists a unique natural transformation $\sigma: \lgk \rightarrow M$ such that $\lambda = \sigma_G \circ \mu$ (where $\sigma_{G}(a) = \sigma(G a)$).
\end{definition}
\begin{definition}
The right Kan extension of $K$ along $G$ is the universal pair of the functor $\rgk: \bb \rightarrow \cb$ and natural transformation $\mu: (\rgk \circ G) \rightarrow K$ such that for any pair of a functor $M: \bb \rightarrow \cb$ and natural transformation $\lambda: (M \circ G) \rightarrow K$ there exists a unique natural transformation $\sigma: M \rightarrow \rgk$ such that $\lambda = \mu \circ \sigma_G$ (where $\sigma_{G}(a) = \sigma(G a)$).
\end{definition}
\begin{center}
\begin{tikzcd}[column sep=1in,row sep=1in]
\bb \arrow[dotted]{dr}{F} & \\
\ab \arrow{u}{G} \arrow{r}{K} & \cb
\end{tikzcd}
\end{center}
Intuitively, if we treat $G$ as an inclusion of $\ab$ into $\bb$ then the Kan extensions of $K$ along $G$ act as extrapolations of $K$ from $\ab$ to all of $\bb$. If $\cb$ is a preorder then the left and right Kan extensions respectively behave as the least upper bound and greatest lower bounds of $K$.

For example, suppose we want to interpolate a monotonic function $K: \mathbb{Z} \rightarrow \rl$ to a monotonic function $F: \rl \rightarrow \rl$ such that $F \circ G = K$ where  $G: \mathbb{Z} \hookrightarrow \rl$ is the inclusion map (morphisms in $\mathbb{Z}, \rl$ are $\leq$):
\begin{center}
\begin{tikzcd}[column sep=1in,row sep=1in]
\rl \arrow{dr}{F} & \\
\mathbb{Z} \arrow{u}{G} \arrow{r}{K} & \rl
\end{tikzcd}
\end{center}
%
We have that $\lgk: \rl \rightarrow \rl$ is simply $K \circ floor$ and $\rgk: \rl \rightarrow \rl$ is simply $K \circ ceil$, where $floor, ceil$ are the rounding down and rounding up functions respectively.


\section{Applications of Kan Extensions}

To start, recall the definitions of the left and right Kan extensions from Section \ref{kanextensionsbackground}. We will explore four applications of Kan extensions to generalization in machine learning:
\begin{itemize}
    \item Section \ref{section:kan-classification}: Learn a classifier from a dataset of labeled examples.
    \item Section \ref{section:kan-clustering}: Learn a mapping from metric spaces $(X, d_X)$ to partitions of $X$.
    \item Section \ref{section:kan-meta-learning}: Learn a mapping from datasets of labeled examples to functions.
    \item Section \ref{section:kan-function-approximation}: Approximate a complex function with a simpler one.
\end{itemize}

In each of these applications we first define categories $\ab,\bb,\cb$ and a functor $K:\ab\rightarrow \cb$ such that $\ab$ is a subcategory of $\bb$ and $G: \ab \hookrightarrow \bb$ is the inclusion functor. Then, we take the left and right Kan extensions $\lgk, \rgk$ of $K$ along $G$ and study their behavior. Intuitively, the more restrictive that $\bb$ is (i.e. the more morphisms in $\bb$) or the larger that $\ab$ is (and therefore the more information that is stored in $K$) the more similar $\lgk, \rgk$ will be to each other.

\section{Classification}\label{section:kan-classification}
We start with a simple application of Kan extensions to supervised learning. Suppose that $\ib$ is a preorder, $\ib' \subseteq \ib$ is a subposet of $\ib$, $K: \ib' \rightarrow \{\false,\true\}$ is a not-necessarily monotonic mapping, and we would like to learn a monotonic function $\ib \rightarrow \{\false,\true\}$ that approximates $K$ on $\ib'$. That is, $K$ defines a finite training set of points $S = \{(x, K(x)) \ |\ x \in \ib' \}$ from which we wish to learn a monotonic function $F: \ib \rightarrow \{\false,\true\}$. Of course, it may not be possible to find a monotonic function that agrees with $K$ on all the points in $\ib'$.
\begin{center}
\begin{tikzcd}[column sep=1in,row sep=1in]
\ib  \arrow[dotted]{dr}{F} & \\
\ib' \arrow[hookrightarrow]{u}{G} \arrow{r}{K} & \{\false,\true\}
\end{tikzcd}
\end{center}
If we treat $\ib'$ as discrete category, then $K$ is a functor and we can solve this problem with the left and right Kan extensions of $K$ along the inclusion functor $G: \ib' \hookrightarrow \ib$.

\begin{proposition}\label{supervisedpreorderproposition}
The left and right Kan extensions of $K: \ib' \rightarrow \{\false,\true\}$ along the inclusion map $G: \ib' \hookrightarrow \ib$ are respectively: 
\begin{gather*}
\lgk: \ib \rightarrow \{\false,\true\}
\qquad
\rgk: \ib \rightarrow \{\false,\true\}
\\
%
%
\lgk(x) = \begin{cases}
\true & \exists x' \in \ib', x' \leq x, K(x') = \true \\
\false  & \text{else}
\end{cases}
\\
%
\rgk(x) = \begin{cases}
\false  & \exists x' \in \ib', x \leq x', K(x') = \false \\
\true & \text{else}
\end{cases}
\end{gather*}
(Proof in Section \ref{proof:supervisedpreorderproposition})
\end{proposition}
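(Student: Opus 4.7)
My plan is to exploit the fact that the target category $\{\false, \true\}$ is a complete and cocomplete preorder, so both Kan extensions exist. Under the pointwise formula, colimits collapse to joins and limits to meets: $\lgk(x) = \bigvee_{x' \in \ib',\, x' \leq x} K(x')$ and $\rgk(x) = \bigwedge_{x' \in \ib',\, x \leq x'} K(x')$. In $\{\false, \true\}$ these evaluate to exactly the formulas claimed, so the bulk of the proof is just translating each existential/universal condition. To avoid invoking the pointwise formula as a black box, I would verify the universal property directly from the definition of left/right Kan extension given in the excerpt.

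For $\lgk$, I would proceed in three steps. First, check that $\lgk$ is a functor, i.e.\ monotonic: if $x \leq y$ in $\ib$ and $\lgk(x) = \true$ with witness $x' \leq x$ in $\ib'$ satisfying $K(x') = \true$, then $x' \leq y$ by transitivity, so $\lgk(y) = \true$. Second, construct the unit $\mu: K \Rightarrow \lgk \circ G$ by verifying $K(a) \leq \lgk(G(a))$ for every $a \in \ib'$: this is trivial when $K(a) = \false$, and when $K(a) = \true$ the element $a$ itself witnesses $\lgk(G(a)) = \true$. Third, verify universality: given any functor $M: \ib \to \{\false, \true\}$ and natural transformation $\lambda: K \Rightarrow M \circ G$ (equivalently, $K(a) \leq M(G(a))$ for all $a \in \ib'$), I need a unique $\sigma: \lgk \Rightarrow M$ satisfying $\lambda = \sigma_G \circ \mu$. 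Uniqueness is automatic because $\{\false, \true\}$ has at most one morphism between any two objects, so it suffices to show $\lgk(x) \leq M(x)$ for each $x \in \ib$: if $\lgk(x) = \true$ with witness $x' \leq x$ and $K(x') = \true$, then $\lambda$ gives $M(G(x')) \geq K(x') = \true$ and monotonicity of $M$ yields $M(x) = \true$.

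The argument for $\rgk$ is entirely dual, swapping $\leq$/$\geq$, joins/meets, and $\true$/$\false$, and constructing a counit $\mu: \rgk \circ G \Rightarrow K$ with the analogous universal property. There is no real obstacle beyond bookkeeping about the direction of inequalities and keeping straight that $\ib'$ is treated as a discrete category (so $K$ need not be monotonic) while $\ib$ retains its preorder structure. Once that setup is fixed, the proof reduces to a mechanical translation of the abstract universal property into Boolean manipulations in the two-element poset.
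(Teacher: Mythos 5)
Your proposal is correct and follows essentially the same route as the paper's proof: verify monotonicity of $\lgk$ and $\rgk$ via transitivity, check the unit/counit inequality pointwise on $\ib'$, and establish universality by comparing with an arbitrary functor $M$ (the paper argues the comparison step by contraposition from $M(x)=\false$, you argue it directly from $\lgk(x)=\true$, which is the same content). Your explicit remarks that uniqueness of $\sigma$ is automatic in a poset and that the formulas are the pointwise join/meet are left implicit in the paper but add nothing that diverges from its argument.
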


In the extreme case that $Ob(\ib') = \emptyset$, for $x \in \ib$ we have that: 
\begin{align*}
\lgk(x) = \\
\begin{cases}
\true & \exists x' \in \ib', x' \leq x, K(x') = \true \\
\false  & \text{else}
\end{cases} = 
\false
\end{align*}
and:
\begin{align*}
\rgk(x) =\\
\begin{cases}
\false & \exists x' \in \ib', x \leq x', K(x') = \false \\
\true & \text{else}
\end{cases} =
\true
\end{align*}
Similarly, in the extreme case that $Ob(\ib') = Ob(\ib)$ we have by the functoriality of $K$ that for $x \in \ib$ both of the following hold if and only if $K(x) = \true$. 
\begin{gather*}
   \exists x' \in \ib', x' \leq x, K(x') = \true
   \\
   \not\exists x' \in \ib', x \leq x', K(x') = \false
\end{gather*}
Therefore in this extreme case we have:
\begin{align*}
\lgk(x) = \rgk(x) = K(x)
\end{align*}

Now suppose that $\ib'$ contains at least one $x'$ such that $K(x') = \true$ and at least one $x'$ such that $K(x') = \false$. In this case $\lgk$ and $\rgk$ split $\ib$ into three regions: a region where both map all points to $\false$, a region where both map all points to $\true$, and a disagreement region.
Note that $\rgk$ has no false positives on $\ib'$ and $\lgk$ has no false negatives on $\ib'$. 

For example, suppose $\ib=\rl, \ib'=\{1,2,3,4\}$ and we have:
\begin{gather*}
    K(1)=\false
    \qquad
    K(2)=\false
    \\
    K(3)=\true
    \qquad
    K(4)=\true
\end{gather*}
Then we have that:
\begin{align*}
\lgk(x) = \\
\begin{cases}
\true & \exists x' \in \ib', x' \leq x, K(x') = \true \\
\false  & \text{else}
\end{cases} 
= \\
\begin{cases}
\true & x \geq 3 \\
\false  & \text{else}
\end{cases}
\end{align*}
and that:
\begin{align*}
\rgk(x) = \\
\begin{cases}
\false & \exists x' \in \ib', x \leq x', K(x') = \false \\
\true & \text{else}
\end{cases}
=\\
\begin{cases}
\true & x > 2 \\
\false  & \text{else}
\end{cases}
\end{align*}
In this case the disagreement region for $\lgk,\rgk$ is $(2,3)$ and for any $x \in (2,3)$ we have $\lgk(x) < \rgk(x)$.  

As another example, suppose $\ib=\rl, \ib'=\{5,6,7,8\}$ and we have:
\begin{gather*}
    K(5)=\false
    \qquad
    K(6)=\true
    \\
    K(7)=\false
    \qquad
    K(8)=\true
\end{gather*}
Then we have that:
\begin{align*}
\lgk(x) =\\
\begin{cases}
\true & \exists x' \in \ib', x' \leq x, K(x') = \true \\
\false  & \text{else}
\end{cases} 
= \\
\begin{cases}
\true & x \geq 6 \\
\false  & \text{else}
\end{cases}
\end{align*}
and that:
\begin{align*}
\rgk(x) = \\
\begin{cases}
\false & \exists x' \in \ib', x \leq x', K(x') = \false \\
\true & \text{else}
\end{cases}
=\\
\begin{cases}
\true & x > 7 \\
\false  & \text{else}
\end{cases}
\end{align*}
In this case the disagreement region for $\lgk,\rgk$ is $[6,7]$ and for any $x \in [6,7]$ we have $\rgk(x) < \lgk(x)$.

%
%
While this approach is effective for learning very simple mappings, there are many choices of $K$ for which $\lgk$ and $\rgk$ do not approximate $K$ particularly well on $\ib'$ and therefore the disagreement region is large. In such a situation we can use a similar strategy to the one leveraged by kernel methods \citep{hofmann2008kernel} and transform $\ib$ to minimize the size of the disagreement region.

That is, we choose a preorder $\ib^{*}$ and transformation $f: \ib \rightarrow \ib^{*}$ such that the size of the disagreement region for $Lan_{f \circ G}K \circ f, Ran_{f \circ G}K \circ f$ is minimized. 
\begin{center}
\begin{tikzcd}[column sep=1in,row sep=1in]
\ib \arrow{r}{f} & \ib^{*} \arrow[dotted]{d}{F} \\
\ib' \arrow[hookrightarrow]{u}{G} \arrow{r}{K} & \{\false,\true\}
\end{tikzcd}
\end{center}
For example, if $\ib^{*} = \rl^a$ we can choose $f$ to minimize the following loss:
\begin{definition}\label{definition:orderingloss}
Suppose we have a set $\ib' \subseteq \ib$ and function $K: \ib' \rightarrow \{\false,\true\}$ such that:
\begin{gather*}
	%
	\exists x', x'' \in \ib',
	K(x') = \true,
	K(x'') = \false
\end{gather*}
Then the ordering loss $l$ maps a function $f: \ib \rightarrow \rl^a$ to an approximation of the size of the disagreement region for $Lan_{f \circ G}K \circ f, Ran_{f \circ G}K \circ f$. Formally, we define the ordering loss $l$ to be:
\begin{gather*}
l: (\ib \rightarrow \rl^a) \rightarrow \rl
\end{gather*}
\begin{align*}\label{optimizationequation}
    l(f) = \sum_{i \leq a}
    \max(0,\ 
    \max\{f(x)[i] \ | \ x \in \ib', K(x) = \false\}
    -\\
    \min\{f(x)[i] \ | \ x \in \ib', K(x) = \true\}
    )
\end{align*}
where $f(x)[i]$ is the $i$th component of the vector $f(x)[i] \in \rl^a$.  
\end{definition}
We can show that minimizing the ordering loss $l$ will also minimize the size of the disagreement region:
\begin{proposition}\label{optimizationproposition}
%
The ordering loss $l$ (Definition \ref{definition:orderingloss}) is non-negative and is only equal to 0 when $\forall x \in \ib'$ we have:
\begin{gather*}
    K(x)=(Lan_{f \circ G}K \circ f)(x)=(Ran_{f \circ G}K \circ f)(x)
\end{gather*}
(Proof in Section \ref{proof:optimizationproposition})
\end{proposition}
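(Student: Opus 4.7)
The plan is to unpack $l(f)=0$ coordinatewise, translate it into a statement about the product preorder on $\rl^a$, and then read off the three equalities by plugging into the explicit formulas for the Kan extensions supplied by Proposition \ref{supervisedpreorderproposition}.

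Non-negativity is immediate: every summand in $l(f)$ has the form $\max(0,\cdot)\geq 0$, and a sum of non-negatives is non-negative. Since $l(f)$ is thus a sum of non-negatives, $l(f)=0$ holds iff every summand vanishes. By the hypothesis of Definition \ref{definition:orderingloss}, both sets $\{x:K(x)=\true\}$ and $\{x:K(x)=\false\}$ are non-empty, so the $\max$ and $\min$ are well-defined, and the vanishing condition is exactly, for each coordinate $i\leq a$:
\begin{gather*}
\max\{f(x)[i]: x\in\ib',\ K(x)=\false\}\;\leq\;\min\{f(x)[i]: x\in\ib',\ K(x)=\true\}.
\end{gather*}
Consequently, for any $x'\in\ib'$ with $K(x')=\false$ and any $x''\in\ib'$ with $K(x'')=\true$, $f(x')\leq f(x'')$ in the componentwise order on $\rl^a$.

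Next, I would specialise Proposition \ref{supervisedpreorderproposition} to the functor $f\circ G:\ib'\hookrightarrow\rl^a$ and post-compose with $f$, obtaining, for each $x\in\ib'$:
\begin{gather*}
(Lan_{f\circ G}K\circ f)(x)=\true\;\iff\;\exists x'\in\ib',\ f(x')\leq f(x),\ K(x')=\true,\\
(Ran_{f\circ G}K\circ f)(x)=\false\;\iff\;\exists x'\in\ib',\ f(x)\leq f(x'),\ K(x')=\false.
\end{gather*}
Then case split on $K(x)$. If $K(x)=\true$, then taking $x'=x$ in the first equivalence witnesses $(Lan_{f\circ G}K\circ f)(x)=\true$; for the $Ran$ side, suppose toward contradiction some $x'$ with $K(x')=\false$ satisfied $f(x)\leq f(x')$. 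Combined with the inequality from the previous paragraph applied to the pair $(x',x)$, which gives $f(x')\leq f(x)$, this forces $f(x)=f(x')$, contradicting the fact that $f$ does not collapse an oppositely-labelled pair of $\ib'$. The case $K(x)=\false$ is symmetric, completing the identification $K(x)=(Lan_{f\circ G}K\circ f)(x)=(Ran_{f\circ G}K\circ f)(x)$.

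The main obstacle I foresee is the boundary case $f(x)=f(x')$ with $K(x)\neq K(x')$: the weak inequality $\max\leq\min$ in the definition of $l$ still permits $l(f)=0$ even when two oppositely-labelled points of $\ib'$ coincide under $f$, and in that degenerate situation the Kan extension witness is admissible, so the three-way equality can fail at the colliding points. The cleanest way to close this gap is to read the loss as enforcing strict separation (so the chain $f(x)[i]\leq f(x')[i]\leq \max_{\false}<\min_{\true}\leq f(x)[i]$ yields a contradiction) or to assume $f$ is injective on $\ib'$; either genericity proviso — implicit in the intended use of the loss — makes the two-line contradiction above rigorous.
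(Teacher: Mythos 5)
Your proposal follows essentially the same route as the paper's proof: non-negativity because every summand is of the form $\max(0,\cdot)$, then unpacking $l(f)=0$ into the componentwise inequality $f(x_0)\leq f(x_1)$ for every pair with $K(x_0)=\false$, $K(x_1)=\true$, and finally reading off the three equalities from the explicit formulas for $Lan_{f\circ G}K$ and $Ran_{f\circ G}K$ given by Proposition \ref{supervisedpreorderproposition}. The boundary case you flag is not a defect of your write-up relative to the paper; it is a genuine gap in the paper's own argument, which asserts outright that when $K(x)=\true$ no $x'\in\ib'$ with $K(x')=\false$ and $f(x)\leq f(x')$ can exist, silently ignoring the possibility $f(x)=f(x')$. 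Indeed a constant map $f$ satisfies $l(f)=0$ while $(Lan_{f\circ G}K\circ f)$ is constantly $\true$ and $(Ran_{f\circ G}K\circ f)$ is constantly $\false$, so the stated equalities fail; hence some proviso of the kind you name (a strict gap between the false-maxima and true-minima in each coordinate, or injectivity of $f$ on oppositely-labelled points of $\ib'$) is genuinely needed to make the implication hold, and your two-line antisymmetry contradiction then closes the argument cleanly.
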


It is relatively straightforward to minimize the ordering loss with an optimizer like subgradient descent \citep{convexopt}.\footnote{Example code at https://anonymous.4open.science/r/Kan\_Extensions-4C21/.}


    
In Table \ref{fashionmnistclassification} we demonstrate that we can use this strategy to distinguish between the ``T-shirt'' (false) and ``shirt'' (true) categories in the  Fashion MNIST dataset \citep{xiao2017/online}. Samples in this dataset have $784$ features (pixels), so we train a simple linear model $f: \rl^{784} \rightarrow \rl^{10}$ with Adam \citep{kingma2014adam} to minimize the ordering loss $l(f)$ over a training set that contains $90\%$ of samples in the dataset. We then evaluate the performance of $Lan_{f \circ G}K \circ f,Ran_{f \circ G}K \circ f$ over both this training set and a testing set that contains the remaining $10\%$ of the dataset. We look at two metrics over both sets.

\begin{definition}\label{definition:true-positive-rate}
The true positive rate is the percentage of all true samples (shirts) which the classifier correctly labels as true. This is also known as recall or sensitivity. The true positive rate is $1.0$ if and only if there are no false negatives.
\end{definition}
\begin{definition}\label{definition:true-negative-rate}
The true negative rate is the percentage of all false samples (T-shirts) which the classifier correctly labels as false. This is also known as specificity. The true negative rate is $1.0$ if and only if there are no false positives.
\end{definition}
As we would expect from the definition of Kan extensions, the map $Lan_{f \circ G}K \circ f$ has no false negatives and $Ran_{f \circ G}K \circ f$ has no false positives on the training set. The metrics on the testing set are in-line with our expectations as well: $Lan_{f \circ G}K \circ f$ has a higher true positive rate and $Ran_{f \circ G}K \circ f$ has a higher true negative rate.

\begin{table}
    \centering
    \begin{tabular}{|c|c|c|c|c|}
        \hline
        \multicolumn{1}{|p{1.5cm}|}{\centering Model}
        &
        \multicolumn{1}{|p{1.0cm}|}{\centering Dataset}
        &
        \multicolumn{1}{|p{2cm}|}{\centering True Positive Rate}
        &
        \multicolumn{1}{|p{2cm}|}{\centering True Negative Rate}
        \\
        \hline
        \hline
        Left Kan Classifier & Train &  $1.000\ (\pm 0.000)$ & $0.612\ (\pm 0.042)$ \\
        \hline
        Right Kan Classifier & Train &  $0.705\ (\pm 0.035)$ & $1.000\ (\pm 0.000)$ \\
        \hline
        Left Kan Classifier & Test  &  $0.815\ (\pm 0.020)$ & $0.593\ (\pm 0.044)$\\
        \hline
        Right Kan Classifier & Test &  $0.691\ (\pm 0.044)$ & $0.837\ (\pm 0.026)$\\
        \hline
    \end{tabular}
    \caption{True positive rate and true negative rate of the left Kan classifier $Lan_{f \circ G}K \circ f$ and the right Kan classifier $Ran_{f \circ G}K \circ f$ where $f$ is a linear map trained to minimize the ordering loss $l(f)$ (Definition \ref{definition:orderingloss}) on the Fashion-MNIST  ``T-shirt'' vs ``shirt'' task \citep{xiao2017/online}. 
    We run a bootstrap experiment by repeatedly selecting $9000$ training samples and $1000$ testing samples, running the training procedure, and computing true positive rate and true negative rate metrics. Mean and two standard error confidence bounds from $10$ such bootstrap iterations are shown.
    }
\label{table:fashionmnistclassification}
\end{table}

\section{Clustering with Supervision}\label{section:kan-clustering}

Clustering algorithms allow us to group points in a dataset together based on some notion of similarity between them. Formally, we can consider a clustering algorithm as mapping a metric space $(X, d_X)$ to a partition of $X$.

In most applications of clustering the points in the metric space $(X, d_X)$ are grouped together based solely on the distances between the points and the rules embedded within the clustering algorithm itself. This is an unsupervised clustering strategy since no labels or supervision influence the algorithm output. For example, agglomerative clustering algorithms like HDBSCAN \citep{mcinnes2017accelerated} and single linkage partition points in $X$ based on graphs formed from the points (vertices) and distances (edges) in $(X, d_X)$.

However, there are some circumstances under which we have a few ground truth examples of pre-clustered training datasets and want to learn an algorithm that can cluster new data as similarly as possible to these ground truth examples. We can define the supervised clustering problem as follows. Given a collection of tuples
\begin{align*}
S = \{(X_1, d_{1}, P_{1}), (X_2, d_{2}, P_{2}), \cdots, (X_n, d_{n}, P_{n})\}
\end{align*}
where each $(X_i, d_{i})$ is a metric space and $P_{i}$ is a partition of $X_i$, we would like to learn a general function $f$ that maps a metric space $(X, d_{X})$  to a partition $P_{X}$ of $X$ such that for each $(X_i, d_{i}, P_{i}) \in S$ the difference between $f(X_i, d_{i})$ and $P_{i}$ is small. 

We can frame this objective in terms of categories and functors by using the functorial perspective on clustering algorithms \cite{culbertson2014categorical, carlsson2013classifying, shiebler2020clustering}. 
\begin{definition}\label{definition:metcategories}
In the category $\met$ objects are metric spaces and the morphisms between $(X, d_X)$ and $(Y, d_Y)$ are non-expansive maps, or functions $f: X \rightarrow Y$ such that:
\begin{gather*}
    d_{Y}(f(x_1),f(x_2)) \leq d_X(x_1,x_2).
\end{gather*}
\end{definition}

\begin{definition}\label{definition:prt}
The objects in the category $\prt$ are tuples $(X, \px)$ where $\px$ is a partition of the set $X$. The morphisms in $\prt$ between $(X, \px)$ and $(Y, \py)$ are refinement-preserving maps, that is functions $f: X \rightarrow Y$ such that for any $S_X \in \px$, there exists some $S_Y \in \py$ with $f(S_X) \subseteq S_Y$.
\end{definition}

\begin{definition}\label{definition:nonoverlapping-flat-clustering-functor}
Given a subcategory $\db$ of $\met$, a  $\db$-clustering functor is a functor $F: \db \rightarrow \prt$ that is the identity on morphisms and underlying sets.
\end{definition}

That is, a  $\db$-clustering functor commutes with the forgetful functors from $\db$ and $\prt$ into $\set$. An example clustering functor is the $\delta$-single linkage clustering functor.
\begin{definition}\label{definition:single-linkage-flat}
The $\delta$-single linkage functor maps a metric space $(X,d_X)$ to the partition of $X$ in which the points $x_1,x_n$ are in the same partition if and only if there exists some sequence of points:
\begin{gather*}
    x_1, x_2, ..., x_{n-1}, x_n
\end{gather*}
such that for all $(x_i,x_{i+1})$ in this sequence we have:
\begin{gather*}
    d_X(x_i, x_{i+1}) \leq \delta
\end{gather*}
\end{definition}
In this section we will work with the restrictions of $\met$ and $\prt$ to the preorder subcategories in which morphisms are limited to inclusion maps $\iota(x) = x$. 
\begin{definition}\label{definition:metid}
$\metid$ is the subcategory of $\met$ in which the morphisms between $(X, d_X)$ and $(Y, d_Y)$ are limited to inclusion functions $\iota(x) = x$. $\metid$ is a preorder and we write:
\begin{gather*}
    (X, d_X) \leq_{\metid} (Y, d_Y)
\end{gather*}
to indicate that $X \subseteq Y$ and that $\iota: (X, d_X) \rightarrow (Y, d_Y)$ is non-expansive.
\end{definition}
Similarly, if $\db$ is a subcategory of $\metid$ then we write %
\begin{gather*}
    (X, d_X) \leq_{\db} (Y, d_Y)
\end{gather*}
to indicate that the inclusion map $\iota: (X, d_X) \rightarrow (Y, d_Y)$ is a morphism in $\db$. 
\begin{definition}\label{definition:prtid}
$\prtid$ is the subcategory of $\prt$ in which the morphisms between $(X, \px)$ and $(Y, \py)$ are limited to inclusion functions $\iota(x) = x$.
$\prtid$ is a preorder and we write:
\begin{gather*}
    (X, \px) \leq_{\prtid} (Y, \py)
\end{gather*}
to indicate that $X \subseteq Y$ and that $\iota: (X, \px) \rightarrow (Y, \py)$ is refinement-preserving.
\end{definition}
%

We can now frame our objective in terms of clustering functors. Suppose $\iota: \prtid \hookrightarrow \prt$ is the inclusion functor. Then given a subcategory $\db \subseteq  \metid$ (Definition \ref{definition:metid}), a discrete subcategory $\tb \subseteq \db$, and a functor $K: \tb \rightarrow \prtid$ such that:
\begin{gather*}
    \iota \circ K: \tb \rightarrow \prt
\end{gather*}
is a $\tb$-clustering functor (Definition \ref{definition:nonoverlapping-flat-clustering-functor}), find the best functor $F: \db \rightarrow \prtid$ such that:
\begin{gather*}
    \iota \circ F: \db \rightarrow \prt
\end{gather*}
is a $\db$-clustering functor and $F \circ G = K$ where $G:\tb \hookrightarrow \db$ is the inclusion functor.
\begin{center}
    \begin{tikzcd}[column sep=1in,row sep=1in]
\db (\subseteq \metid) \arrow{dr}{F} & \\
\tb (\subseteq \db) \arrow[hookrightarrow]{u}{G} \arrow{r}{K} & \prtid
\end{tikzcd}
\end{center}
Intuitively, $Ob(\tb)$ is the set of unlabelled training samples, $K$ defines the labels on these training samples, and $Ob(\db)$ is the set of testing samples.

We would like to use the Kan extensions of $K$ along $G$ to find this best clustering functor. However, these Kan extensions are not guaranteed to be clustering functors. 

For example, consider the case in which $\tb$ is the discrete category that contains the single-element metric space as its only object and $\db$ is the discrete category that contains two objects: the single-element metric space and $\rl$ equipped with the Euclidean distance metric \footnote{This counterexample due to Sam Staton}.
\begin{center}
\begin{tikzcd}[column sep=1in,row sep=1in]
\{(\{*\}, d_{\{*\}}), (\rl, d_{\rl})\} \arrow{dr}{F} & \\
\{(\{*\}, d_{\{*\}})\} \arrow[hookrightarrow]{u}{G} \arrow{r}{K} & \prtid
\end{tikzcd}
\end{center}
Since $\db$ is a discrete category, the behavior of $K$ on $(\{*\}, d_{\{*\}})$ will not affect the behavior of the left and right Kan extensions of $K$ along $G$ on $(\rl, d_{\rl})$. The left Kan extension of $K$ along $G$ will always map $(\rl, d_{\rl})$ to the initial object of $\prtid$ (the empty set). That is, the left Kan extension does not satisfy the conditions of Definition \ref{definition:nonoverlapping-flat-clustering-functor} since it does not act as the identity on the underlying set $\rl$. Futhermore, the right Kan extension of $K$ along $G$ will not exist because $\prtid$ does not have a terminal object. 

In order to solve this problem with Kan extensions we need to add a bit more structure. Suppose $Ob(\db)$ is the discrete category with the same objects as $\db$ and define the following:
\begin{definition}\label{definition:kl-functor}
The functor $K_L: Ob(\db) \rightarrow \prtid$ is equal to $K$ on $\tb$ and maps each object $(X, d_X)$ in $Ob(\db) - Ob(\tb)$ to $\xrefined$. 
\end{definition}
\begin{definition}\label{definition:kr-functor}
The functor $K_R: Ob(\db) \rightarrow \prtid$ is equal to $K$ on $\tb$ and maps each object $(X, d_X)$ in $Ob(\db) - Ob(\tb)$ to $(X, \{X\})$.

\end{definition}
Intuitively, $K_L$ and $K_R$ are extensions of $K$ to all of the objects in $\db$. For any metric space $(X, d_X)$ not in $Ob(\tb)$ the functor $K_L$ maps $(X, d_X)$ to the finest possible partition of $X$ and $K_R$ maps $(X, d_X)$ to the coarsest possible partition of $X$.

%
%
Suppose we go back to the previous example in which $\tb$ is the discrete category containing only the single-element metric space and $\db$ is the discrete category containing both the single-element metric space and $(\rl, d_{\rl})$. Since:
\begin{gather*}
K_L(\rl, d_{\rl}) = (\rl, \{\{x\}\ |\ x \in \rl\})
\end{gather*}
the left Kan extension of $K_L$ along the inclusion
\begin{gather*}
    G: Ob(\db) \hookrightarrow \db
\end{gather*}
must map $(\rl, d_{\rl})$ to the $\leq_{\prtid}$-smallest $(X, \px)$ such that:
\begin{gather*}
(\rl, \{\{x\}\ |\ x \in \rl\}) \leq_{\prtid}
(X, \px)
\end{gather*}
which is $(X, \px) = (\rl, \{\{x\}\ |\ x \in \rl\})$. Similarly, since:
\begin{gather*}
K_R(\rl, d_{\rl}) = (\rl, \{\rl\})
\end{gather*}
the right Kan extension of $K_R$ along the inclusion
\begin{gather*}
    G: Ob(\db) \hookrightarrow \db
\end{gather*}
must map $(\rl, d_{\rl})$ to the $\leq_{\prtid}$-largest $(X, \px)$ such that:
\begin{gather*}
(X, \px) \leq_{\prtid} (\rl, \{\rl\})
\end{gather*}
which is $(X, \px) = (\rl, \{\rl\})$. We can apply the same logic to the behavior of the Kan extensions on the single-element metric space as well, so the composition of
\begin{gather*}
    \iota: \prtid \hookrightarrow \prt
\end{gather*}
to either Kan extension yields a $\db$-clustering functor.

We can now build on this perspective to construct our optimal clustering functor extensions of $K$. 

\begin{proposition}\label{proposition:maximally-refined}
Consider the map $\lgkl: \db \rightarrow \prtid$ that acts as the identity on morphisms and sends the metric space $(X, d_X)$ to the partition of $X$ defined by the transitive closure of the relation $R$ where for $x_1, x_2 \in X$ we have $x_1 R x_2$ if and only if there exists some metric space $(X', d_{X'}) \in \tb$ where:
\begin{gather*}
     (X', d_{X'}) \leq_{\db} (X, d_{X})
\end{gather*}
and $x_1, x_2$ are in the same cluster in $K(X', d_{X'})$. The map:
\begin{gather*}
    \iota \circ \lgkl: \db \rightarrow \prt
\end{gather*}
is a $\db$-clustering functor. (Proof in Section \ref{proof:maximally-refined})
\end{proposition}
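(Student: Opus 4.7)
The plan is to prove both that the explicit construction given for $\lgkl$ actually computes the left Kan extension of $K_L$ along $G$ and that the composite $\iota \circ \lgkl$ satisfies Definition \ref{definition:nonoverlapping-flat-clustering-functor}. The starting observation is that $\prtid$ is a poset: between any two objects there is at most one morphism (the identity inclusion, when it is refinement-preserving). Consequently the pointwise formula for a left Kan extension into a poset-valued codomain collapses to the supremum
\[
\lgkl(Y, d_Y) \;=\; \sup_{\prtid} \bigl\{ K_L(A) \;:\; A \in Ob(\db),\ A \leq_{\db} (Y, d_Y) \bigr\},
\]
and any natural transformation between two functors into a poset is unique when it exists, so the universal property reduces to a system of inequalities.

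I would then compute this supremum concretely. Since $(Y, d_Y) \leq_{\db} (Y, d_Y)$ via the identity and $K_L(Y, d_Y)$ has underlying set $Y$, the underlying set of the supremum is all of $Y$. For the block structure, the contribution of each $K_L(A)$ is either the discrete partition into singletons (when $A \notin \tb$) or $K(A)$ (when $A \in \tb$), and only the latter introduces nontrivial equivalences. The join in $\prtid$ of these partitions is the finest partition of $Y$ that remains coarser than every $K(X', d_{X'})$ with $(X', d_{X'}) \in \tb$ and $(X', d_{X'}) \leq_{\db} (Y, d_Y)$, which is exactly the partition induced by the transitive closure of the relation $R$ in the statement. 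For universality, given any $M: \db \to \prtid$ and $\lambda: K_L \to M \circ G$, each component of $\lambda$ forces $K_L(A) \leq_{\prtid} M(A)$, and functoriality of $M$ propagates this to $K_L(A) \leq_{\prtid} M(Y, d_Y)$ for every $A \leq_{\db} (Y, d_Y)$; taking the supremum on the left yields $\lgkl(Y, d_Y) \leq_{\prtid} M(Y, d_Y)$, which gives the required unique $\sigma$ in the poset.

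Before invoking the supremum formula I also need $\lgkl$ itself to be a functor $\db \to \prtid$, which reduces to checking that whenever $(X, d_X) \leq_{\db} (Y, d_Y)$ the inclusion $X \hookrightarrow Y$ is refinement-preserving between $\lgkl(X, d_X)$ and $\lgkl(Y, d_Y)$. This is where transitivity of $\leq_{\db}$ does the work: any witness $(X', d_{X'}) \leq_{\db} (X, d_X)$ is also a witness $(X', d_{X'}) \leq_{\db} (Y, d_Y)$, so the relation $R$ on $X$ is contained in the restriction to $X$ of the relation $R$ on $Y$, and therefore each $R$-block on $X$ sits inside some $R$-block on $Y$. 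With $\lgkl$ established as a functor and the supremum identification in hand, the clustering-functor conclusion is immediate: $\iota \circ \lgkl$ acts as the identity on underlying sets by construction and as the identity on morphisms because both $\iota$ and $\lgkl$ send inclusions to inclusions, matching Definition \ref{definition:nonoverlapping-flat-clustering-functor}. The main obstacle I anticipate is the bookkeeping for the supremum in $\prtid$: the case analysis on whether $A$ lies in $\tb$, the argument that the underlying set of the supremum really is all of $Y$ rather than a proper subset, and the verification that taking the supremum over all $A \leq_{\db} (Y, d_Y)$ (and not just over $\tb$) does not accidentally coarsen the partition beyond the transitive closure of $R$, all need to be spelled out carefully.
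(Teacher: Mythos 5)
Your argument is correct, and for the claim actually being made here---that $\iota \circ \lgkl$ is a $\db$-clustering functor---it coincides with the paper's proof: the only nontrivial point is monotonicity, and you establish it exactly as the paper does, using transitivity of $\leq_{\db}$ to turn any witness $(X', d_{X'}) \leq_{\db} (X, d_X)$ into a witness for $(Y, d_Y)$, so that the relation $R$ on $X$ is contained in the restriction of the relation $R$ on $Y$ and the transitive-closure blocks nest (the paper phrases this with explicit chains $x, x_1, \dots, x_{n-1}, x^{*}$ rather than at the level of $R$, but it is the same argument). Where you genuinely diverge is in scope: you also identify $\lgkl$ as the left Kan extension of $K_L$ along $G$ via the pointwise supremum formula in the poset $\prtid$. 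That identification is not part of this proposition; the paper defers it to Proposition \ref{clusteringproposition}, where it is proved by directly verifying the universal property (showing $K_L \leq_{\prtid} \lgkl \circ G$ and then $\lgkl \leq_{\prtid} M_L$ for any competitor $M_L$ by pushing the witness chain through $M_L$'s functoriality), and under an extra hypothesis that some clustering functor extending $K$ exists, used there via Proposition \ref{equaltok}. Your pointwise-colimit route buys a cleaner conceptual derivation of the explicit transitive-closure formula, at the cost of having to check that the relevant joins exist in $\prtid$ (which your explicit construction supplies) and of case analysis on whether the indexing object lies in $\tb$; the paper's route avoids any discussion of colimits in $\prtid$ by constructing the comparison map directly. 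None of that extra bookkeeping is needed to discharge the statement as posed.
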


\begin{proposition}\label{proposition:maximally-coarse}
Consider the map $\rgkr: \db \rightarrow \prtid$ that acts as the identity on morphisms and sends the metric space $(X, d_X)$ to the partition of $X$ defined by the transitive closure of the relation $R$ where for $x_1, x_2 \in X$ we have $x_1 R x_2$ if and only if there exists no metric space $(X', d_{X'}) \in \tb$ where:
\begin{gather*}
     (X, d_{X}) \leq_{\db} (X', d_{X'})
\end{gather*}
and $x_1, x_2$ are in different clusters in $K(X', d_{X'})$. The map:
\begin{gather*}
    \iota \circ \rgkr: \db \rightarrow \prt
\end{gather*}
is a $\db$-clustering functor. (Proof in Section \ref{proof:maximally-coarse})
%
\end{proposition}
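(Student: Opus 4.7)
The plan is to verify two things: first, that $\rgkr$ is a well-defined functor from $\db$ to $\prtid$, and second, that its composition with $\iota$ commutes with the forgetful functors to $\set$, so that it satisfies Definition \ref{definition:nonoverlapping-flat-clustering-functor}. Along the way I will also confirm that $\rgkr$ realizes the pointwise right Kan extension of $K_R$ along $G : Ob(\db) \hookrightarrow \db$, which justifies the notation and mirrors the dual treatment of $\lgkl$ in Proposition \ref{proposition:maximally-refined}.

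For well-definedness, first check that each $\rgkr(X, d_X) = (X, \Pa_X)$ is a bona-fide object of $\prtid$. The relation $R$ on $X$ is reflexive and symmetric by inspection. It is also already transitive: if $x_1 R x_2$ and $x_2 R x_3$, then for every $(X', d_{X'}) \in \tb$ with $(X, d_X) \leq_{\db} (X', d_{X'})$ the pairs $x_1, x_2$ and $x_2, x_3$ both lie in a common cluster of $K(X', d_{X'})$, hence so do $x_1, x_3$. So $R$ is an equivalence relation and yields a partition $\Pa_X$ directly. For morphism preservation, suppose $(X, d_X) \leq_{\db} (Y, d_Y)$. Any training space $(X', d_{X'}) \in \tb$ with $(Y, d_Y) \leq_{\db} (X', d_{X'})$ also satisfies $(X, d_X) \leq_{\db} (X', d_{X'})$ by transitivity of $\leq_{\db}$, so the family of training spaces constraining $\Pa_X$ contains the family constraining $\Pa_Y$. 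Consequently every block of $\Pa_X$ is contained in a block of $\Pa_Y$, so the inclusion $X \hookrightarrow Y$ is refinement-preserving, establishing functoriality.

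For the Kan extension identification I will apply the pointwise formula. Since $Ob(\db)$ is discrete, the comma category $(b \downarrow G)$ for $b = (X, d_X)$ reduces to the discrete set $\{c \in Ob(\db) : (X, d_X) \leq_{\db} c\}$, and because $\prtid$ is thin the limit of $K_R$ over this diagram is the $\leq_{\prtid}$-largest object lying below every $K_R(c)$. Since $b$ itself sits in the comma category, the underlying set of this limit is forced to be $X$. The constraints coming from objects $c \notin Ob(\tb)$ are vacuous because $K_R(c) = (X_c, \{X_c\})$ imposes no refinement requirement, and the constraints from $c \in Ob(\tb)$ force every block of the partition to lie within a block of $K(c)$. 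The coarsest such partition of $X$ is exactly the one defined by $R$. Finally, since $\rgkr$ sends $(X, d_X)$ to a partition with underlying set $X$ and acts as the identity on inclusion morphisms, the composite $\iota \circ \rgkr$ commutes with the forgetful functors into $\set$, yielding a $\db$-clustering functor. The main subtlety is the underlying-set bookkeeping in the limit computation; once one notes that $b$ itself forces the underlying set to be $X$, everything else follows from the coarsest-partition characterization above.
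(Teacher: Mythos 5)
Your core argument is correct and takes essentially the same route as the paper: you verify well-definedness and then show the inclusion $(X, d_X) \leq_{\db} (Y, d_Y)$ is refinement-preserving after applying $\rgkr$, using transitivity of $\leq_{\db}$ to observe that the family of constraining training spaces for $(Y, d_Y)$ is contained in the family for $(X, d_X)$. The paper argues the same thing pointwise (fixing $x, x^{*}$ in the same cluster of $\rgkr(X, d_X)$ and chasing the implication), but the two are equivalent. One genuine improvement in your write-up: you explicitly observe that the relation $R$ is already an equivalence relation, so the ``transitive closure'' in the statement is a no-op. The paper's proof silently relies on this --- it jumps from ``$x, x^{*}$ in the same cluster of $\rgkr(X, d_X)$'' to ``there is no constraining $(X', d_{X'})$,'' which is only the definition of $R$ itself, not of its closure --- so your remark plugs a small gap.

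Two caveats. First, the paragraph identifying $\rgkr$ with the pointwise right Kan extension of $K_R$ along $G$ is out of scope: that identification is the content of the later Proposition \ref{clusteringproposition}, which moreover carries an extra existence hypothesis (the existence of \emph{some} clustering functor $F$ with $F \circ G = K$) that you do not invoke. Second, within that paragraph the claim that ``the underlying set of this limit is forced to be $X$'' needs the observation that $(X, d_X)$ itself is an object of the comma category and is the $\leq_{\db}$-smallest one, and you would still need to argue that the infimum over the indicated family actually exists in $\prtid$; you do construct it, but the bookkeeping is compressed. Since none of this is required for the proposition as stated, neither caveat affects the correctness of the proof of the claim actually being asked.
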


We can also make the following claim:
\begin{proposition}\label{equaltok}
Suppose there exists some functor $F: \db \rightarrow \prtid$ such that
\begin{gather*}
    \iota \circ F: \db \rightarrow \prt
\end{gather*}
is a $\db$-clustering functor and $F \circ G = K$. Then for $(X,d_X) \in \tb$ we have that:
\begin{align*}
    F(X, d_X) = \\
    K(X, d_X) = \\
    \lgkl(X, d_X) = \\
    \rgkr(X, d_X) 
\end{align*}
(Proof in Section \ref{proof:equaltok})
\end{proposition}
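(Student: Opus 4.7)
The plan is to prove the three equalities $F(X,d_X) = K(X,d_X)$, $K(X,d_X) = \lgkl(X,d_X)$, and $K(X,d_X) = \rgkr(X,d_X)$ separately, with the first being immediate and the other two requiring essentially the same two-sided argument (with $F$ playing an asymmetric role).

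\textbf{Step 1: $F(X,d_X) = K(X,d_X)$.} Since $(X,d_X) \in \tb$ and $G: \tb \hookrightarrow \db$ is the inclusion, we have $G(X,d_X) = (X,d_X)$, so the assumption $F \circ G = K$ gives $F(X,d_X) = K(X,d_X)$ immediately.

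\textbf{Step 2: $\lgkl(X,d_X) = K(X,d_X)$ for $(X,d_X) \in \tb$.} By Proposition \ref{proposition:maximally-refined}, $\lgkl(X,d_X)$ is the partition induced by the transitive closure of $R$, where $x_1 R x_2$ iff there exists $(X',d_{X'}) \in \tb$ with $(X',d_{X'}) \leq_{\db} (X,d_{X})$ and $x_1,x_2$ in the same cluster of $K(X',d_{X'})$. For the direction $K(X,d_X) \leq_{\prtid} \lgkl(X,d_X)$ (i.e.\ each $K$-cluster is contained in an $\lgkl$-cluster), instantiate $(X',d_{X'}) = (X,d_X)$, which is legal since $(X,d_X) \in \tb$ and the identity inclusion is trivially in $\db$; then any two points sharing a cluster in $K(X,d_X)$ are directly $R$-related and hence lie in the same equivalence class of the transitive closure. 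For the opposite refinement direction, I would use $F$: if $x_1 R x_2$ via some $(X',d_{X'}) \leq_{\db} (X,d_X)$ in $\tb$, then the inclusion $\iota: (X',d_{X'}) \hookrightarrow (X,d_X)$ is a morphism in $\db$, and functoriality of $F$ combined with $F \circ G = K$ yields a refinement-preserving map $K(X',d_{X'}) = F(X',d_{X'}) \rightarrow F(X,d_X) = K(X,d_X)$; hence $x_1, x_2$ already share a cluster in $K(X,d_X)$, and this property is preserved under transitive closure.

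\textbf{Step 3: $\rgkr(X,d_X) = K(X,d_X)$ for $(X,d_X) \in \tb$.} By Proposition \ref{proposition:maximally-coarse}, $\rgkr(X,d_X)$ is induced by the transitive closure of the relation $R'$ where $x_1 R' x_2$ iff no $(X',d_{X'}) \in \tb$ with $(X,d_X) \leq_{\db} (X',d_{X'})$ separates $x_1$ from $x_2$ in $K(X',d_{X'})$. Taking $(X',d_{X'}) = (X,d_X)$ shows that if $x_1,x_2$ lie in different clusters of $K(X,d_X)$ then they are not $R'$-related, and (since $R'$ never identifies them through intermediate points either, as any chain would still have to respect $K(X,d_X)$'s separation) they remain in different $\rgkr$-clusters. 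Conversely, if $x_1,x_2$ share a cluster in $K(X,d_X) = F(X,d_X)$, then for any $(X',d_{X'}) \in \tb$ with $(X,d_X) \leq_{\db} (X',d_{X'})$, functoriality of $F$ applied to the inclusion $\iota: (X,d_X) \hookrightarrow (X',d_{X'})$ gives a refinement-preserving map $K(X,d_X) \to K(X',d_{X'})$, forcing $x_1,x_2$ to share a cluster in $K(X',d_{X'})$ as well; so $x_1 R' x_2$ and they lie in the same $\rgkr$-cluster.

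\textbf{Main obstacle.} The only subtle point is recognizing that, although the Kan extensions are defined via quantifiers ranging over \emph{all} of $\tb$, when the argument $(X,d_X)$ itself lies in $\tb$ the witness $(X',d_{X'}) = (X,d_X)$ collapses one direction and the hypothesized clustering functor $F$ (via functoriality on inclusions in $\db$) supplies the other. The asymmetry — $\lgkl$ needs $F$ to bound it from above, $\rgkr$ needs $F$ to bound it from below — is the only place the full hypothesis on $F$ is used; the reverse inequalities come essentially for free.
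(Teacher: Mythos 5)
Your proposal is correct and follows essentially the same strategy as the paper's proof: in each case the reflexive morphism $(X,d_X)\leq_{\db}(X,d_X)$ gives one refinement direction for free, and the functoriality of the hypothesized clustering functor $F$ on inclusion morphisms in $\db$ supplies the other. The paper merely organizes the argument by the dichotomy ``$x,x^*$ in same cluster of $K(X,d_X)$'' versus ``different clusters,'' handling both $\lgkl$ and $\rgkr$ within each case, while you organize by which Kan extension is being compared to $K$; the content is the same.
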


We can now put everything together and construct the functors $\lgkl,\rgkr$ as Kan extensions.
\begin{proposition}\label{clusteringproposition}
%
Suppose there exists some functor $F: \db \rightarrow \prtid$ such that
\begin{gather*}
    \iota \circ F: \db \rightarrow \prt
\end{gather*}
is a $\db$-clustering functor and $F \circ G = K$. 

Then $\lgkl: \db \rightarrow \prtid$ (Proposition \ref{proposition:maximally-refined}) is the left Kan extension of $K_L: Ob(\db) \rightarrow \prtid$ along the inclusion functor $G: Ob(\db) \hookrightarrow \db$.
\begin{center}
\begin{tikzcd}[column sep=1in,row sep=1in]
\db (\subseteq \metid) \arrow{dr}{\lgkl} & \\
Ob(\db) \arrow[hookrightarrow]{u}{G} \arrow{r}{K_L} & \prtid
\end{tikzcd}
\end{center}

In addition $\rgkr: \db \rightarrow \prtid$ (Proposition \ref{proposition:maximally-coarse}) is the right Kan extension of $K_R: Ob(\db) \rightarrow \prtid$ along the inclusion functor $G: Ob(\db) \hookrightarrow \db$.
\begin{center}
    \begin{tikzcd}[column sep=1in,row sep=1in]
\db (\subseteq \metid) \arrow{dr}{\rgkr} & \\
Ob(\db) \arrow[hookrightarrow]{u}{G} \arrow{r}{K_R} & \prtid
\end{tikzcd}
\end{center}
(Proof in Section \ref{proof:clusteringproposition})
\end{proposition}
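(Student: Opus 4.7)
The plan is to verify the left Kan extension property for $\lgkl$ and, symmetrically, the right Kan extension property for $\rgkr$, relative to the inclusion $G: Ob(\db) \hookrightarrow \db$. Since $\prtid$ is a preorder, any natural transformation between functors into $\prtid$ is determined by pointwise inequalities, so uniqueness of the mediating $\sigma$ in both Kan extension definitions is automatic. What remains is existence of the unit/counit and of the mediating comparison.

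First I would exhibit the unit $\mu: K_L \Rightarrow \lgkl \circ G$. For $(X, d_X) \in Ob(\db)$ this amounts to showing $K_L(X, d_X) \leq_{\prtid} \lgkl(X, d_X)$. If $(X, d_X) \in \tb$, then $K_L(X, d_X) = K(X, d_X)$, and the relation $R$ of Proposition \ref{proposition:maximally-refined} contains every pair coming from a cluster of $K(X, d_X)$, using the trivial inclusion $(X, d_X) \leq_{\db} (X, d_X)$; so $K(X, d_X)$ refines $\lgkl(X, d_X)$. If $(X, d_X) \notin \tb$, then $K_L(X, d_X) = \xrefined$ is the finest partition of $X$ and trivially lies below $\lgkl(X, d_X)$ in $\prtid$. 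Proposition \ref{equaltok} ensures this is compatible with the standing hypothesis on $F$.

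The main step is universality of $\lgkl$. Given any functor $M: \db \to \prtid$ and $\lambda: K_L \Rightarrow M \circ G$, I must produce $\sigma: \lgkl \Rightarrow M$, i.e.\ show $\lgkl(X, d_X) \leq_{\prtid} M(X, d_X)$ for every $(X, d_X) \in \db$. Suppose $x_1 R x_2$ in the generating relation of Proposition \ref{proposition:maximally-refined}: there is $(X', d_{X'}) \in \tb$ with $(X', d_{X'}) \leq_{\db} (X, d_X)$ and $x_1, x_2$ in the same cluster of $K(X', d_{X'})$. The component $\lambda_{(X', d_{X'})}: K(X', d_{X'}) \leq_{\prtid} M(X', d_{X'})$ keeps $x_1, x_2$ co-clustered in $M(X', d_{X'})$, and applying $M$ to the morphism $(X', d_{X'}) \leq_{\db} (X, d_X)$, which is a refinement-preserving inclusion, sends them to the same cluster of $M(X, d_X)$. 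Because $M(X, d_X)$ is itself a partition, its clusters are automatically closed under the equivalence relation generated by $R$, so the transitive closure defining $\lgkl(X, d_X)$ refines $M(X, d_X)$, as required.

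For $\rgkr$ the argument is dual. The counit $\mu: \rgkr \circ G \Rightarrow K_R$ is handled identically, using that $K_R(X, d_X) = (X, \{X\})$ is the coarsest partition of $X$ when $(X, d_X) \notin \tb$. For universality, given $\lambda: M \circ G \Rightarrow K_R$ I must show $M(X, d_X) \leq_{\prtid} \rgkr(X, d_X)$. Suppose for contradiction that $x_1, x_2$ lie in the same cluster of $M(X, d_X)$ yet fail the relation $R$ of Proposition \ref{proposition:maximally-coarse}: then some $(X', d_{X'}) \in \tb$ witnesses $(X, d_X) \leq_{\db} (X', d_{X'})$ with $x_1, x_2$ in different clusters of $K(X', d_{X'})$; applying $M$ to this inclusion and then composing with $\lambda_{(X', d_{X'})}$ would force $x_1, x_2$ into different clusters of $K(X', d_{X'})$ while simultaneously co-clustered, contradicting that refinement-preserving maps send equal images to equal clusters. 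Hence $M(X, d_X)$ refines $\rgkr(X, d_X)$. I expect the main obstacle to be the bookkeeping around the transitive closure in the left case and the contrapositive formulation in the right case; the key observation that unblocks both is that the clusters of any target partition are already equivalence classes, so one only needs to verify preservation of the generating relation $R$, not of its closure.
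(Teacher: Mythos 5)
Your proof is correct and follows the same basic strategy as the paper's: verify the universal property pointwise, exploiting that $\prtid$ is a preorder so uniqueness and the triangle equations are automatic, and use functoriality of the comparison functor $M$ to transport co-clustering along the morphisms $(X', d_{X'}) \leq_{\db} (X, d_X)$ (resp.\ $(X, d_X) \leq_{\db} (X', d_{X'})$). Two small differences are worth noting. First, the paper establishes the unit $K_L \leq_{\prtid} \lgkl \circ G$ and counit $\rgkr \circ G \leq_{\prtid} K_R$ by invoking Proposition \ref{equaltok}, which is where the hypothesis that a clustering functor $F$ with $F \circ G = K$ exists enters; you instead verify these inequalities directly from the definitions of $\lgkl, \rgkr$ (using the reflexive morphism $(X,d_X) \leq_{\db} (X,d_X)$ when $(X,d_X) \in \tb$ and the fact that $\xrefined$ and $(X,\{X\})$ are extremal otherwise), so your argument never actually uses the existence of $F$ — showing that hypothesis is only needed for the interpretation via Proposition \ref{equaltok}, not for the Kan-extension identification itself. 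Second, where the paper chases explicit chains $x, x_1, \ldots, x_{n-1}, x^{*}$ to handle the transitive closure of the generating relation $R$, you observe that the co-clustering relation of the target partition is already an equivalence relation, so it suffices to check preservation of $R$ itself; this is a cleaner way to do the same bookkeeping. Your remark that refinement-preserving inclusions keep co-clustered points co-clustered is exactly the property needed in both directions, so the contradiction in the right-hand case goes through as you state it.
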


Note that when $Ob(\tb) = \emptyset$ we have for any $(X, d_X) \in Ob(\db)$ that:
\begin{align*}
    &\lgkl(X, d_X) = K_L(X, d_X) = (X, \{\{x\} \ |\ x \in X\})
    \\
    &\rgkr(X, d_X) = K_R(X, d_X) = (X, \{\{X\}\})
\end{align*}
In general for any metric space $(X, d_X) \in Ob(\db) - Ob(\tb)$ the functors $\lgkl, \rgkr$ respectively map $(X, d_X)$ to the finest (most clusters) and coarsest (fewest clusters) partitions of $X$ such that for any metric space $(X', d_{X'}) \in \tb$ we have:
\begin{gather*}
K(X', d_{X'})=\lgkl(X', d_{X'})=\rgkr(X', d_{X'})    
\end{gather*}
and $\lgkl, \rgkr$ are functors. For example, suppose we have a metric space $(X, d_X)$ where $X=\{x_1, x_2, x_3\}$. We can form the subcategories $\tb \subseteq \db \subseteq \metid$ where:
\begin{align*}
    & Ob(\tb)=\{(\{x_1,x_2\}, d_X), (\{x_1,x_3\}, d_X),  (\{x_2,x_3\}, d_X)\} 
    \\
    & Ob(\db) = Ob(\tb) \cup (\{x_1,x_2,x_3\}, d_X)
\end{align*}
$\tb$ is a discrete category and the only non-identity morphisms in $\db$ are the inclusions $\{x_i, x_j\} \hookrightarrow \{x_1, x_2, x_3\}$. Now define $K: \tb \rightarrow \prtid$ to be the following functor:
\begin{align*}
&K(\{x_1,x_2\}, d_X) = \{\{x_1, x_2\}\} 
\\
&K(\{x_1,x_3\}, d_X) = \{\{x_1\},\{x_3\}\} 
\\
&K(\{x_2,x_3\}, d_X) = \{\{x_2\},\{x_3\}\}
\end{align*}
In this case we have that:
\begin{align*}
    &K_L(\{x_1,x_2,x_3\}, d_X) = \{\{x_1\},\{x_2\},\{x_3\}\}
    \\
    &K_R(\{x_1,x_2,x_3\}, d_X) = \{\{x_1,x_2,x_3\}\}
\end{align*}
Since the only points that need to be put together are $x_1, x_2$ and there are no non-identity morphisms out of $\{x_1,x_2,x_3\}$ in $\db$, we have:
\begin{align*}
    &\lgkl(\{x_1,x_2,x_3\}, d_X) = \{\{x_1,x_2\}, \{x_3\}\}
    \\
    &\rgkr(\{x_1,x_2,x_3\}, d_X) = \{\{x_1,x_2,x_3\}\}
\end{align*}
As another example, suppose $\db$ is $\metid$ and $\tb$ is the discrete subcategory of $\db$ whose objects are all metric spaces with $\leq 2$ elements. Define the following $\tb$-clustering functor:
\begin{gather*}
    K(\{x_1, x_2\}, d) =
    \begin{cases}
    \{\{x_1, x_2\}\}& d(x_1, x_2) \leq \delta
    \\
    \{\{x_1\},\{x_2\}\}  & \text{else}
    \end{cases}
\end{gather*}
%
%

Now for some metric space $(X, d_X)$ with $|X| > 2$ and points $x_1, x_2 \in X$ we have that $\lgkl$ maps $x_1,x_2$ to the same cluster if and only if there exists some chain of points $x_1, \cdots, x_2$ in $X$ where for each pair of adjacent points $x'_1, x'_2$ in this chain and any metric space $(\{x'_1, x'_2\}, d_{X'}) \in \db$ equipped with a non-expansive inclusion map:
\begin{gather*}
\iota: (\{x'_1, x'_2\}, d_{X'})
\hookrightarrow
(X, d_X)
\end{gather*}
in $\db$, it must be that the points $x'_1, x'_2$ are in the same cluster in $K(\{x'_1, x'_2\}, d_{X'})$. This is the case if and only if:
\begin{gather*}
d_{X}(x'_1, x'_2) \leq \delta
\end{gather*}.
Therefore, $\lgkl$ maps $x_1,x_2$ to the same cluster if and only if $x_1,x_2$ are in the same connected component of the $\delta$-Vietoris Rips complex of $(X, d_X)$. $\lgkl$ is therefore the $\delta$-single linkage functor (Definition \ref{definition:single-linkage-flat}).

In contrast, since $|X| > 2$ there are no morphisms in $\db$ from $(X, d_X)$ to any metric spaces in $\tb$. Therefore:
\begin{gather*}
    \rgkr(X, d_X) = (X, \{X\})
\end{gather*}

\begin{figure}[h]
\centering
\includegraphics[width=7.5cm,height=7.5cm]{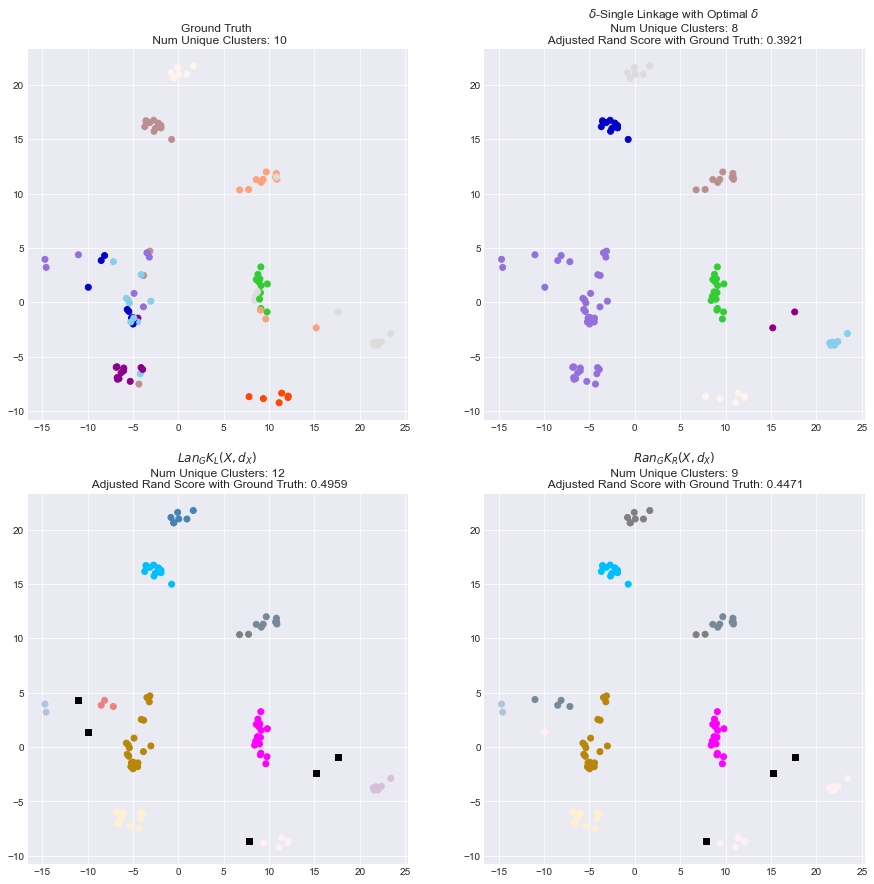}
\caption{
Cluster assignments of a $100$ point testing set $X_{te}$ from the Fashion MNIST dataset \citep{xiao2017/online} shown in UMAP space \citep{mcinnes2018umap}. Each color corresponds to a unique cluster, and points without clusters are shown as black squares. We show ground truth clothing categories, unsupervised $\delta$-single linkage cluster assignments ($\delta$ chosen via line search), and the $\lgkl, \rgkr$ supervised cluster assignments. The $\lgkl,\rgkr$ algorithms are trained on a separate $10,000$ point random sample $X_{tr}$ from the Fashion MNIST dataset. 
}
\label{fashionmnistclustering}
\end{figure}

We can use this strategy to learn a clustering algorithm from real-world data. Recall that the Fashion MNIST dataset \citep{xiao2017/online} contains images of clothing and the categories that each image falls into. Suppose that we have two subsets of this dataset: a training set $X_{tr}$ in which images are grouped by category and a testing set $X_{te}$ of ungrouped images. We can use UMAP \citep{mcinnes2018umap} to construct metric spaces $(X_{tr}, d_{X_{tr}})$ and $(X_{te}, d_{X_{te}})$ from these sets. 

Now suppose we would like to group the images in $X_{te}$ as similarly as possible to the grouping of the images in $X_{tr}$. 

For any category $\db^{*} \subseteq \met$ with:
\begin{gather*}
Ob(\db^{*}) = \{(X_{tr}, d_{X_{tr}}), (X_{te}, d_{X_{te}})\}
\end{gather*}
we can define subcategories $\tb \subseteq \db \subseteq \metid$ and functor $K: \tb \rightarrow \prtid$ as follows:
\begin{enumerate}
    \item Initialize $\tb$ to an empty category and $\db$ to be the discrete category with a single object $\{(X_{te}, d_{X_{te}})\}$.
    \item For every morphism
    \begin{gather*}
        f: (X_{tr}, d_{X_{tr}}) \rightarrow (X_{te}, d_{X_{te}})
    \end{gather*}
    in $\db^{*}$ and pair $(x_1,x_2) \in X_{tr}$ of samples in the same clothing category, add the object $(\{f(x_1), f(x_2)\}, d_{X_{te}})$ to $\tb$ and $\db$, add the inclusion morphism:
    \begin{gather*}
    \iota: (\{f(x_1), f(x_2)\}, d_{X_{te}}) \hookrightarrow (X_{te}, d_{X_{te}})
    \end{gather*}
    to $\db$, and define $K(\{f(x_1), f(x_2)\}, d_{X_{te}})$ to map $f(x_1)$ and $f(x_2)$ to the same cluster.
    %
    \item For every morphism
    \begin{gather*}
    f: (X_{te}, d_{X_{te}}) \rightarrow (X_{tr}, d_{X_{tr}})
    \end{gather*}
    in $\db^{*}$ define a metric space $(X'_{te}, d_{X'_{te}})$ where $X_{te} = X'_{te}$ and $d_{X_{te}} = d_{X'_{te}}$. Add the object $(X'_{te}, d_{X'_{te}})$ to $\tb$ and $\db$, add the inclusion map
    \begin{gather*}
        \iota: (X_{te}, d_{X_{te}}) \hookrightarrow (X'_{te}, d_{X'_{te}})
    \end{gather*}
    to $\db$ and define $K(X'_{te}, d_{X'_{te}})$ to be the partition of $X'_{te}$ defined by the preimages of the function $(h \circ f)$ where $h$ maps each element of $X_{tr}$ to the category of clothing it belongs to.
\end{enumerate}
We can now use $\lgkl$ and $\rgkr$ to partition $X_{te}$.
%
%
In Figure \ref{fashionmnistclustering} we
compare the clusterings produced by $\lgkl$ and $\rgkr$ to the ground truth clothing categories. We can compare clustering performance with the following metric:
\begin{definition}\label{definition:rand-score}
The Rand score \citep{rand1971objective, scikit-learn} between the partitions $\px,\px'$ of the set $X$ is the ratio:
\begin{gather*}
    RI(\px, \px') =
    \frac{
    |both(\px, \px')| +
    |neither(\px, \px')|}{|X|^2}
\end{gather*}
where:
\begin{gather*}
    both(\px, \px') = \\ 
    \{x_i, x_j \ |\ 
    \exists s_X \in \px, x_i, x_j \in s_X
    \wedge 
    \exists s'_X \in \px', x_i, x_j \in s'_X\}
    \\
    neither(\px, \px') = \\
    \{x_i, x_j \ |\
    \not\exists s_X \in \px, x_i, x_j \in s_X
    \wedge 
    \not\exists s'_X \in \px', x_i, x_j \in s'_X\}
\end{gather*}
\end{definition}
The value of the Rand score is heavily dependent on the number of clusters, which can make it difficult to interpret. Therefore, in practice we usually work with a chance-adjusted variant of the Rand score that is close to $0$ for a random partition and is exactly $1$ for identical partitions.
\begin{definition}\label{definition:adjusted-rand-score}
Suppose $\mathbf{P}^2_X$ is the set of all pairs of partitions of the set $X$ and $\mu_{\mathbf{P}^2_X}$ is the uniform distribution over $\mathbf{P}^2_X$. Then the adjusted Rand score \citep{hubert1985comparing, scikit-learn} between the partitions $\px,\px'$ of the set $X$ is the ratio:
\begin{gather*}
ARI(\px, \px') =
\frac{
    RI(\px, \px') - E_{\mu_{\mathbf{P}^2_X}}[RI]
}{
    \max_{\mathbf{P}^2_X}(RI) - E_{\mu_{\mathbf{P}^2_X}}[RI]
}
\end{gather*}
where $RI(\px, \px')$ is the Rand score between the partitions $\px,\px'$ (Definition \ref{definition:rand-score}).
\end{definition}

As a baseline we compute the $\delta$-single linkage clustering algorithm (Definition \ref{definition:single-linkage-flat}) with $\delta$ chosen via line search to maximize the adjusted Rand score (Definition \ref{definition:adjusted-rand-score}) with the ground truth labels.
As expected, we see that $\lgkl$ produces a finer clustering (more clusters) than does $\rgkr$ and that the clusterings produced by $\lgkl$ and $\rgkr$ are better than the clustering produced by single linkage in the sense of adjusted Rand score with ground truth.

\section{Meta-Supervised Learning}\label{section:kan-meta-learning}

%
%


Suppose $I$ is a set and $O$ is a partial order. A supervised learning algorithm maps a labeled dataset (set of pairs of points in $I \times O$) to a function $f: I \rightarrow O$. For example, both $\lgk$ and $\rgk$ from Section \ref{section:kan-classification} are supervised learning algorithms.

In this Section we use Kan extensions to derive supervised learning algorithms from pairs of datasets and functions. Our construction combines elements of Section \ref{section:kan-classification}'s point-level algorithms and Section \ref{section:kan-clustering}'s dataset-level functoriality constraints.

Suppose we have a finite partial order $S_f \subseteq (I \rightarrow O)$ of functions where for $f,f' \in S_f$ we have $f \leq f'$ when $\forall x \in I, f(x) \leq f'(x)$. 
\begin{proposition}\label{proposition:upper-antichain}
For any subset $S^{*}_f \subseteq S_f$ the upper antichain of $S^{*}_f$ is the set:
\begin{gather*}
\{f \ |\
    f \in S^{*}_f,\
    \not\exists f^{*} \in S^{*}_f, f < f^{*}\}
\}
\end{gather*}
The upper antichain of $S^{*}_f$ is an antichain in $S^{*}_f$, and for any function $f \in S^{*}_f$ there exists some function $f^{*}$ in the upper antichain of $S^{*}_f$ such that $f \leq f^{*}$.
(Proof in Section \ref{proof:upper-antichain})
\end{proposition}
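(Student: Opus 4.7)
The plan is to unpack the two claims in the statement and verify them separately, leaning crucially on finiteness of $S_f$ (and hence of any subset $S^{*}_f$).

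First I would show that the upper antichain, call it $U$, is indeed an antichain in $S^{*}_f$. Suppose for contradiction that there exist $f_1, f_2 \in U$ with $f_1 < f_2$. Since $f_2 \in S^{*}_f$ and $f_1 < f_2$, this directly witnesses the existence of an $f^{*} \in S^{*}_f$ strictly above $f_1$, contradicting $f_1 \in U$. Hence no two distinct elements of $U$ are comparable, so $U$ is an antichain.

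Second I would prove the existence claim by a finite ascending-chain argument. Given $f \in S^{*}_f$, if $f \in U$ we take $f^{*} = f$ and are done since $f \leq f$. Otherwise, by definition of $U$ there exists $f_1 \in S^{*}_f$ with $f < f_1$. Iterating, if $f_i \notin U$ then there exists $f_{i+1} \in S^{*}_f$ with $f_i < f_{i+1}$. This produces a strictly ascending chain $f < f_1 < f_2 < \cdots$ inside the finite set $S^{*}_f$, so the process must terminate after finitely many steps at some $f_k \in U$. Transitivity of the partial order then gives $f \leq f_k$, as required.

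The only subtle point is ensuring the inductive construction actually terminates, which rests entirely on $S^{*}_f \subseteq S_f$ being finite (a hypothesis stated just before the proposition); without finiteness one could have an infinite strictly increasing chain with no element of $U$ above a given $f$. No further machinery is needed, so I expect no significant obstacle beyond writing the contradiction and the termination argument cleanly.
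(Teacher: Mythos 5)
Your proof is correct and takes essentially the same route as the paper: the antichain claim is immediate from the definition of the upper antichain, and the existence claim rests on the finiteness of $S^{*}_f$ to force termination. The paper phrases the termination step as an induction on the size of the strict up-set $\{f^{*} \in S^{*}_f \mid f < f^{*}\}$ rather than as a strictly ascending chain, but this is only a cosmetic difference.
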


Intuitively the upper antichain of $S^{*}_f$ is the collection of all functions $f\in S^{*}_f$ that are not strictly upper bounded by any other function in $S^{*}_f$. The upper antichain of an empty set is of course itself an empty set. 
\begin{definition}
%
We can form the following categories:
\begin{enumerate}
    %
    %
    \item[$\dcb$]: The objects in $\dcb$ are $\leq$-antichains of functions $X_f \subseteq S_f$. $\dcb$ is a preorder in which $X_{f} \leq X'_{f}$ if for $f \in X_f$ there must exist some $f' \in X'_f$ where $f \leq f'$. 
    %
    %
    \item[$\dbb$]: The objects in $\dbb$ are labeled datasets, or sets of pairs $U = \{(x,y)\ |\ x \in I, y\in O\}$.  $\dbb$ is a preorder such that $U \leq U'$ when for all $(x, y') \in U'$ there exists $(x, y) \in U$ where $y \leq y'$.
    \item[$\dab$]: A subcategory of $\dbb$ such that if $U \leq U' \in \dbb$ then $U \leq U' \in \dab$.
\end{enumerate} 
\end{definition}

\begin{proposition}\label{proposition:dab-dbb-dcb-categories}
$\dbb$ and $\dcb$ are preorder catgories. (Proof in Section \ref{proof:dab-dbb-dcb-categories})
\end{proposition}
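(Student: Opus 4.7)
The plan is to verify, separately for $\dbb$ and $\dcb$, that the binary relation used to define morphisms is reflexive and transitive, which is what a preorder category requires. In both cases the relation is built by quantifying over elements of one side and asking for witnesses on the other side that are related by the underlying order on $O$ (for $\dbb$) or on $S_f$ (for $\dcb$). Since these underlying orders are themselves preorders, the verification should reduce to routine quantifier manipulation.

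For $\dcb$, I would first check reflexivity: given an antichain $X_f$, for each $f \in X_f$, take the witness $f' := f \in X_f$ and use reflexivity of $\leq$ on $S_f$ to conclude $f \leq f'$. For transitivity, suppose $X_f \leq X'_f$ and $X'_f \leq X''_f$. Given $f \in X_f$, the first hypothesis yields $f' \in X'_f$ with $f \leq f'$, and the second applied to this $f'$ yields $f'' \in X''_f$ with $f' \leq f''$. Transitivity of $\leq$ on $S_f$ gives $f \leq f''$, which is the required witness in $X''_f$.

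For $\dbb$, the same pattern applies but the quantifier direction is reversed (the existential witness in $U$ is chosen for each pair in $U'$). Reflexivity: for any $(x, y') \in U$, pick $(x, y) := (x, y') \in U$ and use $y' \leq y'$ in $O$. Transitivity: assume $U \leq U'$ and $U' \leq U''$, and take any $(x, y'') \in U''$. The second hypothesis produces $(x, y') \in U'$ with $y' \leq y''$, and the first hypothesis applied to this pair in $U'$ produces $(x, y) \in U$ with $y \leq y'$. Transitivity in $O$ then gives $y \leq y''$, which is exactly the witness needed.

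There is no real obstacle here; the only subtlety worth flagging is that in $\dcb$ the objects are antichains rather than arbitrary subsets of $S_f$, but since the preorder relation between antichains does not require us to produce witnesses inside the same antichain, the antichain condition plays no role in reflexivity or transitivity of the relation itself. In both categories there is at most one morphism between any two objects, so the composition and identity laws are trivially satisfied once reflexivity and transitivity are established.
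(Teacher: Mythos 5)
Your proposal is correct and follows essentially the same route as the paper's proof: reflexivity of $\leq$ in $\dbb$ and $\dcb$ by taking each element as its own witness, and transitivity by chaining witnesses through the intermediate object and invoking transitivity of the underlying order. Your extra remarks (that the antichain condition is irrelevant to the relation's reflexivity and transitivity, and that composition and identity are automatic in a preorder) are accurate and only make explicit what the paper leaves implicit.
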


Intuitively, $\dab$ is a collection of labeled training datasets and $\dbb$ is a collection of labeled testing datasets. We can define a functor that maps each training dataset to all of the trained models that agree with that dataset.
\begin{proposition}\label{proposition:meta-learn-k}
The map $K: \dab\rightarrow\dcb$ that acts as the identity on morphisms and maps the object $U \in \dab$ to the upper antichain of the following set:
\begin{align*}
    &S_{K}(U) =  \{f \ |\
        f \in S_f, \forall (x, y) \in U, f(x) \leq y\}
\end{align*}
is a functor. (Proof in Section \ref{proof:meta-learn-k})
\end{proposition}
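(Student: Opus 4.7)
The plan is to verify the two functoriality conditions separately: (i) $K$ sends every object of $\dab$ to a well-formed object of $\dcb$, and (ii) $K$ sends every morphism in $\dab$ to a morphism in $\dcb$. Since both $\dab$ and $\dcb$ are preorders and $K$ is declared to act as the identity on morphisms, preservation of identities and of composition is automatic once condition (ii) is verified, so there is nothing further to check.

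For condition (i), I would simply invoke Proposition \ref{proposition:upper-antichain} applied to $S^{*}_f = S_K(U) \subseteq S_f$: that proposition says the upper antichain of any subset of $S_f$ is a $\leq$-antichain, which is precisely what it means to be an object of $\dcb$.

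The substantive step is condition (ii). Suppose $U \leq U'$ in $\dab$, i.e.\ for every $(x, y') \in U'$ there exists $(x, y) \in U$ with $y \leq y'$. First I would prove the inclusion $S_K(U) \subseteq S_K(U')$: given $f \in S_K(U)$ and any $(x, y') \in U'$, pick the corresponding $(x, y) \in U$; then $f(x) \leq y \leq y'$ by assumption on $f$ and by the hypothesis $U \leq U'$, so $f \in S_K(U')$. Next, take any $f \in K(U)$, the upper antichain of $S_K(U)$. By the just-proved inclusion, $f \in S_K(U')$. Applying the ``moreover'' half of Proposition \ref{proposition:upper-antichain} to $S^{*}_f = S_K(U')$ yields some $f^{*}$ in the upper antichain $K(U')$ of $S_K(U')$ with $f \leq f^{*}$. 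This is exactly the condition for $K(U) \leq K(U')$ in $\dcb$, completing the verification.

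The main obstacle, such as it is, is recognizing that the apparent contravariance in the definition of $S_K$ (``$f(x) \leq y$'') is in fact covariant once one unfolds the preorder on $\dbb$ correctly: a larger $U'$ has larger labels, so any $f$ dominated by $U$'s labels is automatically dominated by $U'$'s. Once this monotonicity $S_K(U) \subseteq S_K(U')$ is in hand, Proposition \ref{proposition:upper-antichain} does the rest of the work essentially for free by letting us re-select a maximal representative inside $S_K(U')$.
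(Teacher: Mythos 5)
Your proposal is correct and follows essentially the same route as the paper's own proof: both establish $K(U)\subseteq S_K(U)\subseteq S_K(U')$ via the chain $f(x)\leq y\leq y'$, and then invoke Proposition \ref{proposition:upper-antichain} to find $f^{*}\in K(U')$ with $f\leq f^{*}$, giving $K(U)\leq K(U')$. Your version is, if anything, slightly more careful in stating explicitly that the upper antichain of $S_K(U)$ is contained in $S_K(U)$, a step the paper leaves implicit.
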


Now define $G: \dab \hookrightarrow \dbb$ to be the inclusion functor. A functor $F: \dbb \rightarrow \dcb$ such that $F \circ G$ commutes with $K$ will then be a mapping from the testing datasets in $\dbb$ to collections of trained models.
%
\begin{center}
\begin{tikzcd}[column sep=1in,row sep=1in]
\dbb  \arrow{dr}{F} & \\
\dab \arrow[hookrightarrow]{u}{G} \arrow{r}{K} & \dcb
\end{tikzcd}
\end{center}
We can take the left and right Kan extensions of $K$ along the inclusion functor $G: \dab \hookrightarrow \dbb$ to find the optimal such mapping.

%
\begin{proposition}\label{factorizedfunctionskanproposition}
%
%
The map $\lgk$ that acts as the identity on morphisms and maps the object $U \in \dbb$ to the upper antichain of the following set:
\begin{gather*}
S_L(U) =
\bigcup_{\{U' \ |\ U'\in \dab,  U' \leq U\}}\  K(U')
\end{gather*}
is the left Kan extension of $K$ along $G$.

Next, the map $\rgk$ that acts as the identity on morphisms and maps the object $U \in \dbb$ to the upper antichain of the following set:
\begin{gather*}
S_R(U) =\\
 \{f \ |\
    f \in S_f,\
    \forall U' \in \{U' \ |\ U'\in \dab,  U \leq U'\}, \\
    \exists f' \in K(U'),\ 
    f  \leq f'
\}
\end{gather*}
is the right Kan extension of $K$ along $G$.
(Proof in Section \ref{proof:factorizedfunctionskanproposition})
\end{proposition}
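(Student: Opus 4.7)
The plan is to exploit the fact that $\dcb$ is a thin category (a preorder), so Kan extensions valued in $\dcb$ reduce to pointwise suprema and infima. Concretely, for any $U \in \dbb$, the pointwise left Kan extension at $U$ is the supremum in $\dcb$ of $K$ over the comma category $G \downarrow U$, and since $G$ is an inclusion of preorders, this comma category is just $\{U' \in \dab : U' \leq U\}$. Dually for the right Kan extension, the comma category is $\{U' \in \dab : U \leq U'\}$. The task thus reduces to computing these suprema and infima in $\dcb$ and matching them with the formulas in the statement. Naturality of the universal cone is automatic because $\dcb$ is thin.

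Next, I would establish a concrete description of suprema and infima in $\dcb$. For a family $\{X_f^i\}$ of antichains in $S_f$, I claim the supremum in $\dcb$ is the upper antichain of $\bigcup_i X_f^i$: every $X_f^i$ sits below this upper antichain via Proposition \ref{proposition:upper-antichain}, and any upper bound $Z$ for the family must, by definition of $\leq$ in $\dcb$, dominate every element of every $X_f^i$ and hence dominate every element of the union. A dual argument shows the infimum of $\{X_f^i\}$ is the upper antichain of the set
\begin{gather*}
    \{f \in S_f : \forall i,\ \exists f' \in X_f^i,\ f \leq f'\};
\end{gather*}
here Proposition \ref{proposition:upper-antichain} is used to lift any candidate lower bound's elements to maximal representatives without changing their $\dcb$-equivalence class.

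Applying these formulas to the comma categories above yields the result. For the left extension, the relevant indexing set is $\{K(U') : U' \in \dab,\ U' \leq U\}$, whose union is $S_L(U)$ by definition; its upper antichain is $\lgk(U)$ as stated. For the right extension, the relevant infimum is over $\{K(U') : U' \in \dab,\ U \leq U'\}$, and unfolding the infimum formula shows that an $f \in S_f$ lies in the generating set precisely when it satisfies the condition defining $S_R(U)$; its upper antichain is $\rgk(U)$. The main obstacle is bookkeeping: because $\dcb$-morphisms are defined via existence of dominating elements, one must be careful when substituting $K(U')$ for $S_K(U')$ inside unions and existentials. Proposition \ref{proposition:upper-antichain} is the key tool here, since it lets us promote any $f \in S_K(U')$ to a dominating $f^{*} \in K(U')$, ensuring that the upper antichain of $\bigcup K(U')$ and the upper antichain of $\bigcup S_K(U')$ coincide, and analogously on the right. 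Everything else is a direct unfolding of the $\leq$ relations in $\dbb$ and $\dcb$.
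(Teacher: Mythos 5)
Your proposal is correct and takes a genuinely different route from the paper. The paper verifies the universal property of the left and right Kan extensions directly: it first checks that $\lgk$ and $\rgk$ are functors, then shows $K \leq \lgk \circ G$ (resp.\ $\rgk \circ G \leq K$), and finally establishes universality by comparing to an arbitrary competitor $M_L$ (resp.\ $M_R$) and chaining Proposition \ref{proposition:upper-antichain} with functoriality of the competitor. You instead invoke the general pointwise Kan extension theorem, observing that $G\downarrow U$ and $U\downarrow G$ collapse to $\{U'\in\dab : U'\leq U\}$ and $\{U'\in\dab : U\leq U'\}$ since $G$ is an inclusion of preorders, and then reduce the whole problem to identifying suprema and infima in $\dcb$. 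Your lattice-theoretic lemmas are correct: the supremum of a family of antichains is the upper antichain of their union, and the infimum is the upper antichain of the set of functions dominated in every member of the family; both checks use Proposition \ref{proposition:upper-antichain} exactly where the paper does. What your approach buys is conceptual transparency --- the formulas $S_L$ and $S_R$ are revealed as colimit and limit formulas rather than clever guesses to be verified, and functoriality of $\lgk,\rgk$ falls out of the pointwise construction rather than requiring a separate argument. What it costs is appeal to a nontrivial background theorem; you should also make explicit that the pointwise formula applies because $\dcb$ has \emph{all} suprema and infima (the upper antichain of any subset of the finite $S_f$ exists), not merely because $\dcb$ is thin --- thinness only gives you naturality for free. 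The remark at the end about substituting $K(U')$ for $S_K(U')$ is not actually needed for this proposition (the statement is already phrased in terms of $K(U')$), though the observation that the upper antichains of $\bigcup K(U')$ and $\bigcup S_K(U')$ coincide is true.
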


Intuitively the functions in $\rgk(U)$ and $\lgk(U)$ are as large as possible subject to constraints imposed by the selection of sets in $Ob(\dab)$. The functions in $\lgk(U)$ are subject to a membership constraint and grow smaller when we remove objects from $Ob(\dab)$. The functions in $\rgk(U)$ are subject to an upper boundedness-constraint and grow larger when we remove objects from $Ob(\dab)$. 

Consider the extreme case where $Ob(\dab) = \emptyset$. For any $U \in \dbb$ we have that:
\begin{align*}
    &S_L(U) =
    \bigcup_{\{U' \ |\ U'\in \emptyset, \cdots\}}\  K(U') = 
    \emptyset
    \\
    &S_R(U) = 
        \{f \ |\
        f \in S_f,\
        \forall U' \in \emptyset, \ 
        \cdots
    \} = 
    S_f
\end{align*}
so $\lgk(U)$ is empty and $\rgk(U)$ is the upper antichain of $S_f$.

Now consider the extreme case where $Ob(\dab) = Ob(\dbb)$. For any $U \in \dbb$ and $f \in K(U)$ the functoriality of $K$ implies that:
\begin{gather*}
    \forall U' \in \{U' \ |\ U'\in \dab,  U \leq U'\}, \ 
    \exists f' \in K(U'),\ 
    f \leq f'
\end{gather*}
and therefore $f \in S_R(U)$. This implies $K(U) \leq \rgk(U)$. Similarly, for any $f \in \lgk(U)$ it must be that:
\begin{gather*}
     \exists
     U' \in \dab,\ 
     U' \leq U,\ 
     f \in K(U')
\end{gather*}
which by the functoriality of $K$ implies that
\begin{gather*}
    \exists f^{*} \in K(U), f \leq f^{*}
\end{gather*}
and therefore $\lgk(U) \leq K(U)$. Therefore in this extreme case we have:
\begin{gather*}
    \rgk(U) = \lgk(U) = K(U)
\end{gather*}

Let's now consider a more concrete example. Suppose $I=\rlp^2, O=\{\false,\true\}$, and $S_f$ is the finite set of linear classifiers $l: \rlp^2 \rightarrow \{\false, \true\}$ that can be expressed as:
\begin{align*}
l_{a,b}(x_1,x_2) =
    \begin{cases}
        \true & x_2 \leq a*x_1 + b \\
        \false & \text{else}
    \end{cases}
\end{align*}
where $a,b$ are integers in $(-100, 100)$.
Intuitively:
\begin{itemize}
    \item
    The classifiers in $\lgk(U)$ are selected to be the classifiers that predict $\true$ as often as possible among the set of all classifiers that have no false positives on some $U' \in \dab$ where $U' \leq U$.
    \item
    The classifiers in $\rgk(U)$ are constructed to predict $\true$ as often as possible subject to a constraint imposed by the selection of sets in $\dab$. For every set $U' \in \dab$ where $U \leq U'$ it must be that each classifier in $\rgk(U)$ is upper bounded at each point in $I$ by some classifier in $S_f$ with no false positives on $U'$.
\end{itemize}
We can give a concrete example to demonstrate this. Suppose that:
\begin{gather*}
    Ob(\dab) = \{
        \{((1,3), \false)\},
        \{((4,4), \false)\},
        \\
        \qquad\{
        ((2,2), \false),
        ((1,3), \false),
        ((4,4), \false)\}
        \}
    \}
    \\
    Ob(\dbb) = Ob(\dab) \cup \{
    \{
        ((1,3), \false),
        ((4,4), \false)\}
    \}
\end{gather*}
We can visualize $\dbb$ as follows:
\begin{center}
\begin{tikzcd}[column sep=0.1in,row sep=0.3in]
& \{((4,4), \false)\}
\\
\{((1,3), \false)\}  & \{
        ((1,3), \false),
        ((4,4), \false)\}  \arrow{l}{\leq}  \arrow{u}{\leq} &
\\
 & \{
        ((2,2), \false),
        ((1,3), \false),
        ((4,4), \false)\}
        \arrow{u}{\leq} &
\end{tikzcd}
\end{center}
We can see the following:
\begin{itemize}
\item $l_{(1,1)} \in K(\{((1,3), \false)\})$ since:
\begin{gather*}
l_{(1,1)}(1,3) =
\begin{cases}
\true & 3 \leq 1*1 + 1 \\
\false & \text{else}
\end{cases} = \false
\end{gather*}
but we have that:
\begin{align*}
&l_{(1,2)}(1,3) = 
\begin{cases}
\true & 3 \leq 1*1 + 2 \\
\false & \text{else}
\end{cases} =  \true
\\
&l_{(2,1)}(1,3) = 
\begin{cases}
\true & 3 \leq 2*1 + 1 \\
\false & \text{else}
\end{cases} = \true
\end{align*}
\item $l_{(0,2)} \in K(\{((1,3), \false)\})$ since:
\begin{gather*}
l_{(0,2)}(1,3) =
\begin{cases}
\true & 3 \leq 0*1 + 2 \\
\false & \text{else}
\end{cases} = \false
\end{gather*}
but we have that:
\begin{align*}
&l_{(0,3)}(1,3) = 
\begin{cases}
\true & 3 \leq 0*1 + 3 \\
\false & \text{else}
\end{cases} =  \true
\\
&l_{(1,2)}(1,3) = 
\begin{cases}
\true & 3 \leq 1*1 + 2 \\
\false & \text{else}
\end{cases} = \true
\end{align*}
\item $l_{(0,3)} \in K(\{((4,4), \false)\})$ since:
\begin{gather*}
l_{(0,3)}(4,4) =
\begin{cases}
    \true & 4 \leq 0*4 + 3 \\
    \false & \text{else}
\end{cases} =
\false
\end{gather*}
but we have that:
\begin{align*}
&l_{(1,3)}(4,4) =
\begin{cases}
    \true & 4 \leq 1*4 + 3 \\
    \false & \text{else}
\end{cases} = \true
\\
&l_{(0,4)}(4,4) =
\begin{cases}
    \true & 4 \leq 0*4 + 4 \\
    \false & \text{else}
\end{cases} = \true
\end{align*}
\item $l_{(0,1)} \in K(\{
        ((2,2), \false),
        ((1,3), \false),
        ((4,4), \false)\})$ since:
\begin{gather*}
l_{(0,1)}(2,2) =
\begin{cases}
    \true & 2 \leq 0*2 + 1 \\
    \false & \text{else}
\end{cases} =
\false
\\
l_{(0,1)}(1,3) =
\begin{cases}
    \true & 3 \leq 0*1 + 1 \\
    \false & \text{else}
\end{cases} =
\false
\\
l_{(0,1)}(4,4) =
\begin{cases}
    \true & 4 \leq 0*4 + 1 \\
    \false & \text{else}
\end{cases} =
\false
\end{gather*}
but we have that:
\begin{align*}
&l_{(1,1)}(4,4) =
\begin{cases}
    \true & 4 \leq 1*4 + 1 \\
    \false & \text{else}
\end{cases} = \true
\\
&l_{(0,2)}(2,2) =
\begin{cases}
    \true & 2 \leq 0*2 + 2 \\
    \false & \text{else}
\end{cases} = \true
\end{align*}
\end{itemize}

By the definition of $\lgk$ we have that:
\begin{gather*}
\lgk(\{
    ((1,3), \false),
    ((4,4), \false)\})
\end{gather*}
must contain $l_{(0,1)}$ since we have that:
\begin{gather*}
l_{(0,1)} \in K(\{
    ((2,2), \false),
    ((1,3), \false),
    ((4,4), \false)\})
\end{gather*}
but:
\begin{gather*}
l_{(0,2)} \not\in K(\{
    ((2,2), \false),
    ((1,3), \false),
    ((4,4), \false)\})
\end{gather*}
and:
\begin{gather*}
l_{(1,1)} \not\in K(\{
    ((2,2), \false),
    ((1,3), \false),
    ((4,4), \false)\})
\end{gather*}
Similarly, by the definition of $\rgk$ we have that:
\begin{gather*}
\rgk(\{
    ((1,3), \false),
    ((4,4), \false)\})
\end{gather*}
%
must contain $l_{(0,2)}$ since we have that:
\begin{gather*}
l_{(0,2)} \leq l_{(0,3)}
\qquad
l_{(0,2)} \leq l_{(0,2)}
\end{gather*}
but that there is no $l_{(a,b)}$ such that $l_{(0,2)} < l_{(a,b)}$ that is in both $K(\{ ((4,4), \false)\})$ and $K(\{((1,3), \false)\})$ since:
\begin{gather*}
l_{(1,2)} \not\in K(\{((4,4), \false)\})
\qquad
l_{(0,3)} \not\in K(\{((1,3), \false)\})
\end{gather*}
\begin{figure}[h]
\centering
\includegraphics[width=9cm,height=9cm]{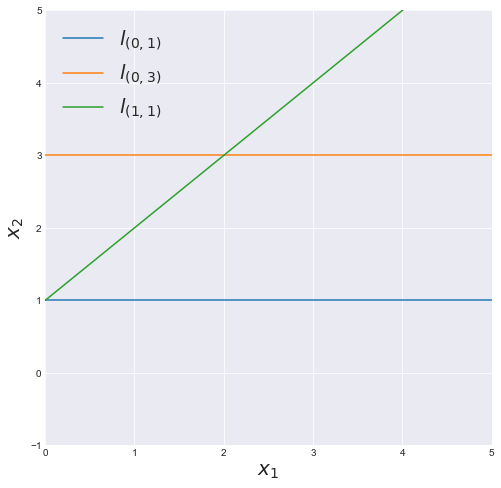}
\caption{
The decision boundaries defined by $l_{(1,1)}$, $l_{(0,3)}$, and $l_{(0,1)}$.
}
\label{linearfitfigure}
\end{figure}

\section{Function Approximation}\label{section:kan-function-approximation}
%
%
%

In many learning applications there may be multiple functions in a class that fit a particular set of data similarly well. In such a situation Occam's Razor suggests that we are best off choosing the simplest such function.
%
%
For example, we can choose the function with the smallest Kolmogorov complexity, also known as the minimum description length (MDL) function \citep{RISSANEN1978465}. In this Section we will explore how we can use Kan extensions to find the MDL function that fits a dataset.

Suppose $I$ is a set, $O$ is a partial order, and $S$ is a finite subset of $I$. We can define the following preorder:
\begin{definition}\label{definition:S-ordering}
Define the preorder $\lqs$ on $(I \rightarrow O)$ such that $f_1 \lqs f_2$ if and only if $\forall x \in S, f_1(x) \leq f_2(x)$. If $f_1 \lqs f_2, f_2 \not\leq_{S} f_1$ then write $f_1 <_{S} f_2$ and if $f_1 \lqs f_2 \lqs f_1$ then write $f_1 =_{S} f_2$. 
\end{definition}
%
%
%
Now suppose also that $\clq$ is some finite subset of the space of all functions $(I \rightarrow O)$ equipped with a total order $\leq_{c}$ such that $f_1 \leq_{c} f_2$ whenever the Kolmogorov complexity of $f_1$ is no larger than that of $f_2$. Note that functions with the same Kolmogorov complexity may be ordered arbitrarily in $\clq$.
\begin{proposition}\label{proposition:minimum-kolmogorov-subset}
Given a set of functions $S_{f} \subseteq \clq$ we can define a subset $S_{f_c} \subseteq S_{f}$, which we call the minimum Kolmogorov subset of $S_{f}$, such that for any function $f \in S_{f}$ the set $S_{f_c}$ contains exactly one function $f_c$ where $f =_{S} f_c$. This function $f_c$ satisfies $f_c \leq_{c} f$.
(Proof in Section \ref{proof:minimum-kolmogorov-subset})
\end{proposition}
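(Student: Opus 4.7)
The plan is to exploit two features together: that $=_S$ is an equivalence relation on $S_f$, and that $\leq_c$ totally orders the ambient set $\clq$, so that each equivalence class has a well-defined $\leq_c$-minimum. First, I would verify that $=_S$ is indeed an equivalence relation on $S_f$: reflexivity and transitivity follow because $\lqs$ is a preorder (pointwise reflexivity and transitivity of $\leq$ on $O$), and symmetry is built into the definition $f_1 =_S f_2 \iff f_1 \lqs f_2 \text{ and } f_2 \lqs f_1$. Then partition $S_f$ into the equivalence classes $[f] = \{f' \in S_f \mid f' =_S f\}$.

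Next, for each equivalence class $[f]$, observe that $[f] \subseteq S_f \subseteq \clq$ is finite and nonempty, and that $\leq_c$ restricts to a total order on $[f]$. Hence $[f]$ has a unique $\leq_c$-minimum, which I call $f_c$. Define
\begin{equation*}
S_{f_c} = \{\,f_c \mid [f] \text{ is an equivalence class of } S_f \text{ under } =_S\,\}.
\end{equation*}
By construction $S_{f_c} \subseteq S_f$, and each equivalence class contributes exactly one element.

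To verify the claimed properties: given any $f \in S_f$, its equivalence class $[f]$ has a $\leq_c$-minimum $f_c \in S_{f_c}$ with $f =_S f_c$ (existence). For uniqueness, if $f', f'' \in S_{f_c}$ both satisfy $f =_S f'$ and $f =_S f''$, then by transitivity $f' =_S f''$, so they belong to the same equivalence class; but $S_{f_c}$ contains exactly one representative per class, forcing $f' = f''$. Finally, since $f \in [f]$ and $f_c = \min_{\leq_c} [f]$, we have $f_c \leq_c f$ as required.

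There is no real obstacle here; the argument is essentially definitional. The only point to be careful about is that functions with equal Kolmogorov complexity may be tied, but the hypothesis that $\leq_c$ is a total order on $\clq$ (with ties broken arbitrarily) is exactly what guarantees that each finite equivalence class has a unique minimum, so the construction of $S_{f_c}$ is well-defined.
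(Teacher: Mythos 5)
Your proof is correct and follows essentially the same approach as the paper's: partition $S_f$ into $=_S$-equivalence classes, take the unique $\leq_c$-minimum of each finite class, and collect these minima into $S_{f_c}$. You are slightly more explicit in verifying that $=_S$ is an equivalence relation, but the underlying argument is the same.
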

We can use these constructions to define the following categories:
\begin{definition}\label{definition:approximation-categories}
Given the sets of functions $\sof \subseteq \stf \subseteq \clq$ define $\sofc$ to be the minimum Kolmogorov subset of $\sof$. We can construct the categories $\fab,\fbb,\fcb$ as follows.
\begin{itemize}
    \item The set of objects in the discrete category $\fab$ is $\sofc$.
    \item The set of objects in $\fbb$ is $\stf$. $\fbb$ is a preorder with morphisms $\lqs$.
    \item $\fcb$ is the subcategory of $\fbb$ in which objects are functions in $\sofc$ and morphisms are $\lqs$.
\end{itemize}
\end{definition}

Intuitively a functor $\fbb \rightarrow \fcb$ acts as a choice of a minimum Kolmogorov complexity function in $\sofc$ for each function in $\stf$. For example, if $\sof$ contains all linear functions and $\stf$ is the class of all polynomials then we can view a functor $\fbb \rightarrow \fcb$ as selecting a linear approximation for each polynomial in $\stf$.

\begin{proposition}\label{proposition:minimal-overapproximation}
For some function $g \in \stf$ define its minimal $S$-overapproximation to be the function $h \in \sofc$ where $g \leq_{S} h$ and $\forall h' \in \sofc$ where $g\leq_{S} h'$ we have $h \leq_{S} h'$. If this function exists it is unique.
\end{proposition}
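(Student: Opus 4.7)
The plan is to reduce uniqueness of the minimal $S$-overapproximation to the defining property of the minimum Kolmogorov subset $\sofc$ established in Proposition \ref{proposition:minimum-kolmogorov-subset}. Specifically, I will argue that any two candidate minimal $S$-overapproximations must be $=_S$-equivalent, and then invoke the fact that $\sofc$ contains at most one representative of each $=_S$-equivalence class.

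First, suppose that $h, h' \in \sofc$ are both minimal $S$-overapproximations of $g \in \stf$. By assumption, $g \leq_S h$ and $g \leq_S h'$. Applying the minimality condition on $h$ to the witness $h' \in \sofc$ (which satisfies $g \leq_S h'$) yields $h \leq_S h'$. Symmetrically, applying the minimality condition on $h'$ to the witness $h \in \sofc$ yields $h' \leq_S h$. Combining these two inequalities in the preorder $\lqs$ of Definition \ref{definition:S-ordering} gives $h =_S h'$.

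Second, by the construction of $\sofc$ in Proposition \ref{proposition:minimum-kolmogorov-subset}, the set $\sofc$ contains \emph{exactly one} function in each $=_S$-equivalence class represented in $\sof$. Since $h$ and $h'$ both lie in $\sofc$ and satisfy $h =_S h'$, this forces $h = h'$, establishing uniqueness.

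There is no substantive obstacle here: the claim is essentially a formal consequence of combining antisymmetry-up-to-$=_S$ (which follows immediately from the minimality clause applied in both directions) with the one-representative-per-class property of $\sofc$. The only thing worth being careful about is not confusing the preorder $\lqs$ with a partial order on all of $(I\rightarrow O)$ — the argument genuinely needs the extra input from Proposition \ref{proposition:minimum-kolmogorov-subset} to upgrade $=_S$-equivalence to honest equality, since otherwise two genuinely distinct functions that happen to agree on $S$ could both serve as minimal overapproximations.
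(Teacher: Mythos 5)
Your proof is correct and follows essentially the same route as the paper's: apply the minimality clause in both directions to get $h \leq_S h'$ and $h' \leq_S h$, then use the defining property of $\sofc$ (exactly one representative per $=_S$-class, from Proposition \ref{proposition:minimum-kolmogorov-subset}) to upgrade $=_S$-equivalence to equality. Your version merely makes explicit the appeal to the one-representative-per-class property that the paper compresses into ``by the definition of $\sofc$.''
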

\begin{proof}
Suppose $h_1,h_2$ are both minimal $S$-overapproximations of $g$. Then $h_1 \leq_{S} h_2$ and $h_2 \leq_{S} h_1$ which by the definition of $\sofc$ implies that $h_1=h_2$.
\end{proof}

\begin{proposition}\label{proposition:maximal-underapproximation}
For some function $g \in \stf$ define its maximal $S$-underapproximation to be the function $h \in \sofc$ where $h\leq_{S} g$ and $\forall h' \in \sofc$ where $h'\leq_{S} g$ we have $h' \leq_{S} h$. If this function exists it is unique.
\end{proposition}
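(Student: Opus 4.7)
The plan is to mirror exactly the short argument used for Proposition \ref{proposition:minimal-overapproximation}, since the two statements are formally dual (swap $\leq_S$ for $\geq_S$) and $\sofc$ provides the same uniqueness-of-representative property in both directions.

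First, I would assume for contradiction (or rather for the sake of demonstrating equality) that $h_1, h_2 \in \sofc$ are both maximal $S$-underapproximations of $g$. By the defining property applied to $h_1$ as the maximal underapproximation, any $h' \in \sofc$ with $h' \leq_S g$ satisfies $h' \leq_S h_1$; applying this to $h' = h_2$ (which lies in $\sofc$ and satisfies $h_2 \leq_S g$) gives $h_2 \leq_S h_1$. By the symmetric argument, applying the defining property of $h_2$ to $h' = h_1$, we get $h_1 \leq_S h_2$.

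Combining these two inequalities yields $h_1 =_S h_2$, i.e.\ $h_1$ and $h_2$ agree on every point of $S$. Now I invoke Proposition \ref{proposition:minimum-kolmogorov-subset}: by construction $\sofc$ contains exactly one representative of each $=_S$-equivalence class in $\sof$. Since $h_1, h_2 \in \sofc$ with $h_1 =_S h_2$, they must be the same function. Hence the maximal $S$-underapproximation, if it exists, is unique.

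There is essentially no obstacle here; the whole argument is two lines applied symmetrically, and the only subtlety is remembering to cite Proposition \ref{proposition:minimum-kolmogorov-subset} explicitly rather than treating $=_S$ as literal equality on $(I \rightarrow O)$. It would be worth noting in passing that existence is not claimed here (only uniqueness), just as in the overapproximation case, since $\sofc$ being finite does not by itself guarantee that the set $\{h' \in \sofc \mid h' \leq_S g\}$ has a greatest element under $\leq_S$.
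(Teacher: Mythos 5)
Your proposal is correct and matches the paper's own proof: both take two purported maximal $S$-underapproximations $h_1, h_2$, derive $h_1 \leq_S h_2$ and $h_2 \leq_S h_1$ from the defining property applied symmetrically, and conclude $h_1 = h_2$ from the fact that $\sofc$ contains a unique representative of each $=_S$-class. Your version merely makes the final step explicit by citing Proposition \ref{proposition:minimum-kolmogorov-subset}, where the paper says only ``by the definition of $\sofc$''; this is a presentational difference, not a different argument.
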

\begin{proof}
Suppose $h_1,h_2$ are both maximal $S$-underapproximations of $g$. Then $h_2\leq_{S} h_1$ and $h_1 \leq_{S} h_2 $ which by the definition of $\sofc$ implies that $h_1=h_2$.
\end{proof}

\begin{proposition}\label{proposition:maximal-and-minimal}
Suppose that for some $g \in \stf$ there exists some $h \in \sofc$ such that $h(x) =_{S} g(x)$. Then $h$ will be both the minimal $S$-overapproximation and the maximal $S$-underapproximation of $g$.
\end{proposition}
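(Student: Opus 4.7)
The plan is to unpack the definition of $=_{S}$ and then lean entirely on the transitivity of the preorder $\lqs$ (Definition \ref{definition:S-ordering}) to verify each of the two universal properties in turn.

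First, I would extract from the hypothesis $h =_{S} g$ the two inequalities $h \lqs g$ and $g \lqs h$, using Definition \ref{definition:S-ordering}. This already secures the ``inequality half'' of both conclusions: $g \lqs h$ is exactly the overapproximation inequality required by Proposition \ref{proposition:minimal-overapproximation}, and $h \lqs g$ is the underapproximation inequality required by Proposition \ref{proposition:maximal-underapproximation}. Since $h \in \sofc$ by hypothesis, the candidate lives in the right set.

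Next I would verify the universal (minimality/maximality) clause. For the minimal $S$-overapproximation property, take an arbitrary $h' \in \sofc$ with $g \lqs h'$ and combine this with $h \lqs g$ via transitivity to conclude $h \lqs h'$. Symmetrically, for the maximal $S$-underapproximation property, take an arbitrary $h' \in \sofc$ with $h' \lqs g$ and combine with $g \lqs h$ to conclude $h' \lqs h$. Uniqueness of $h$ in each role is then automatic from Propositions \ref{proposition:minimal-overapproximation} and \ref{proposition:maximal-underapproximation}.

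There is no real obstacle here: the argument is a pair of one-line transitivity chains, and the only thing to be careful about is correctly translating the symbol $=_{S}$ into its two constituent inequalities before invoking transitivity. The proof would likely occupy only a few lines in the final write-up.
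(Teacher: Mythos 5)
Your proposal is correct and follows essentially the same route as the paper's own proof: both unpack $h =_{S} g$ into the two inequalities $g \lqs h$ and $h \lqs g$, and then use transitivity of $\lqs$ (the chains $h =_{S} g \lqs h'$ and $h' \lqs g =_{S} h$) to verify the minimality and maximality clauses. Your explicit appeal to Propositions \ref{proposition:minimal-overapproximation} and \ref{proposition:maximal-underapproximation} for uniqueness is a harmless addition; no gaps.
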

\begin{proof}
To start, note that $h$ must satisfy $g \leq_{S} h$ and for any $h' \in \sofc$ we have $h =_{S} g\leq_{S} h'$ so $h$ is the minimal $S$-overapproximation of $g$. Next, note that $h$ must satisfy $h\leq_{S} g$ and for any $h' \in \sofc$ we have $h'\leq_{S} g =_{S} h$ so $h$ is also the maximal $S$-underapproximation of $g$.
\end{proof}

We can now show the following:
\begin{proposition}\label{approxfunctionkanproposition}
%
Define both $K: \fab \hookrightarrow \fcb$ and $G: \fab \hookrightarrow \fbb$ to be inclusion functors. Then:

\begin{itemize}
    \item Suppose that for any function $g \in \stf$ there exists a minimal $S$-overapproximation (Proposition \ref{proposition:minimal-overapproximation}) $h$ of $g$. Then the left Kan extension of $K$ along $G$ is the functor $\lgk$ that acts as the identity on morphisms and maps $g$ to $h$.
    \item Suppose that for any function $g \in \stf$ there exists a maximal $S$-underapproximation (Proposition \ref{proposition:maximal-underapproximation}) $h$ of $g$. Then the right Kan extension of $K$ along $G$ is the functor $\rgk$ that acts as the identity on morphisms and maps $g$ to $h$.
\end{itemize}
(Proof in Section \ref{proof:approxfunctionkanproposition})
\end{proposition}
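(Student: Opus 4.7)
The plan is to verify the universal property directly, using the fact that $\fbb$ and $\fcb$ are preorders so that natural transformations reduce to pointwise $\lqs$ comparisons and mediating transformations are automatically unique when they exist.

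For the left Kan extension claim, I would first check that the object assignment $g \mapsto h$ (where $h$ is the minimal $S$-overapproximation of $g$) extends to a functor $\lgk : \fbb \to \fcb$. If $g_1 \lqs g_2$ in $\fbb$, then composing with $g_2 \lqs \lgk(g_2)$ shows $\lgk(g_2)$ is an $S$-overapproximation of $g_1$, so minimality of $\lgk(g_1)$ forces $\lgk(g_1) \lqs \lgk(g_2)$. By Proposition \ref{proposition:maximal-and-minimal}, every $f \in \sofc$ is its own minimal $S$-overapproximation, so $\lgk \circ G = K$ on objects and the unit $\mu : K \to \lgk \circ G$ can be taken to be the identity transformation. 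The analogous construction using maximal $S$-underapproximations yields $\rgk$ together with its counit $\rgk \circ G \to K$.

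Next I would verify the universal property for $\lgk$. Given a functor $M : \fbb \to \fcb$ and a natural transformation $\lambda : K \to M \circ G$, I need a natural transformation $\sigma : \lgk \to M$, which in the preorder setting is just the assertion $\lgk(g) \lqs M(g)$ for every $g \in \fbb$. The plan is to apply $\lambda$ on $\fab$ together with functoriality of $M$ on the morphism $g \lqs \lgk(g)$ in $\fbb$, and then invoke the characterization of $\lgk(g)$ as the minimum in $\sofc$ among $S$-overapproximations of $g$ to conclude the required comparison.

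The right Kan extension argument is dual, exchanging overapproximation with underapproximation and reversing all of the $\lqs$ comparisons. I expect the main obstacle to be the verification of the universal property: translating the pointwise bound provided by $\lambda$ only on $\sofc$ into a global inequality on all of $\fbb$ requires using the single-representative structure of $\sofc$ (Propositions \ref{proposition:minimal-overapproximation} and \ref{proposition:maximal-underapproximation}) to rule out alternative candidates for $M(g)$ sitting between $g$ and $\lgk(g)$ in the $\lqs$ order.
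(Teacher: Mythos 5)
Your functoriality argument and the identification $\lgk \circ G = K$ via Proposition \ref{proposition:maximal-and-minimal} coincide with the paper's proof. The problem is the universal-property step, which is exactly the step you flag as the main obstacle, and the route you sketch does not close it. Applying $M$ to the morphism $g \lqs \lgk(g)$ and applying $\lambda$ at the object $\lgk(g) \in \sofc$ yields $M(g) \lqs M(\lgk(g))$ and $\lgk(g) \lqs M(\lgk(g))$: both inequalities bound their left-hand sides by $M(\lgk(g))$ and give no comparison between $\lgk(g)$ and $M(g)$, which is what a transformation $\sigma : \lgk \to M$ requires. Moreover, no appeal to the single-representative structure of $\sofc$ can repair this, because the needed inequality can genuinely fail. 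Take $I = S = \{*\}$, $O = \{0 < 1 < 2\}$, $\sof = \sofc = \{a,c\}$ and $\stf = \{a,g,c\}$, where $a, g, c$ take the values $0, 1, 2$ respectively; every element of $\stf$ has a minimal $S$-overapproximation, and that of $g$ is $c$, so the proposed $\lgk$ sends $g \mapsto c$. But the monotone functor $M$ with $M(a) = M(g) = a$ and $M(c) = c$ admits $\lambda : K \to M \circ G$ (identities at $a$ and $c$), while $c \not\lqs a = M(g)$, so no $\sigma : \lgk \to M$ exists. The pointwise left Kan extension at $g$ is the least upper bound in $\sofc$ of $\{f \in \sofc \mid f \lqs g\}$ (here $a$), which need not agree with the minimal $S$-overapproximation, so the comparison $\lgk(g) \lqs M(g)$ is simply not available for $g \notin \sofc$ without further hypotheses.

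You should also not expect to find the missing ingredient in the paper's own argument: there, the comparison $\lgk(f) \lqs M_L(f)$ is derived only for $f \in \sofc$, where $\lambda$ supplies $f \lqs M_L(f)$; the case of a general $g \in \stf$, which is the only case where anything needs proving, is not addressed, and for it the premise $g \lqs M_L(g)$ is unavailable. So your instinct about where the difficulty lies is correct, but the plan as written would fail at that point, and completing it requires either additional assumptions restricting the competing functors $M$ (or the order structure of $\stf$ relative to $\sofc$), or replacing the claimed description of $\lgk$ by the least-upper-bound formula above. The dual remarks apply verbatim to the right Kan extension half, with underapproximations and the greatest-lower-bound formula.
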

\begin{center}
\begin{tikzcd}[column sep=1in,row sep=1in]
\fbb  \arrow{dr}{F} & \\
\fab \arrow[hookrightarrow]{u}{G} \arrow[hookrightarrow]{r}{K} & \fcb
\end{tikzcd}
\end{center}
Intuitively, the Kan extensions of the inclusion functor $K:\fab \rightarrow \fcb$ along the inclusion functor $G:\fab \rightarrow \fbb$ map a function $g \in \stf$ to its best $\sof$-approximations over the points in $S$. 

For example, suppose $I=O=\rl$, $g$ is a polynomial, $\sof$ is the set of lines defined by all pairs of points in $S$ and $\stf = \sof \cup g$. $\lgk$ and $\rgk$ may or may not exist depending on the choice of $S$ and $g$. In Figure \ref{linearfitfigure} we give an example $S,g$ in which $\lgk$ exists and $\rgk$ does not (left) and an example $S,g$ in which $\rgk$ exists and $\lgk$ does not (right).
\begin{figure}[h]
\centering
\includegraphics[width=9cm,height=5cm]{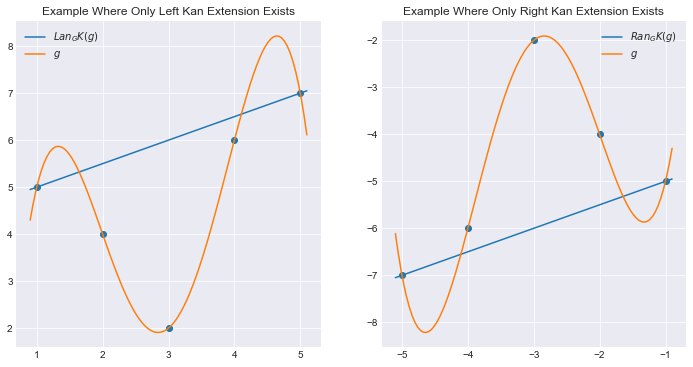}
\caption{
Left and right Kan extensions of $K: \fab \hookrightarrow \fcb$ along $G: \fab\hookrightarrow \fbb$ for two example sets $S$ and polynomials $g$ where $\sof$ is the class of lines and $\stf = \sof \cup g$.
}
\label{linearfitfigure}
\end{figure}

As another example, suppose $I=O=\rl$, $\sof$ is a subset of all polynomials of degree $|S|-1$ and $\stf$ is a subset of all functions $\rl\rightarrow\rl$. Since there always exists a unique $n-1$ degree polynomial through $n$ unique points, for any $S$ there exists some $\sof$ so that both $\lgk$ and $\rgk$ exist and map $g\in \stf$ to the unique $|S|-1$ degree polynomial that passes through the points $\{(x, g(x)) \ |\ x \in S\}$.

As another example, consider the classification case in which $I=\rl^a, O=\{\false,\true\}$, $\sof$ is a subset of all neural networks with a single hidden layer and $\stf$ is a subset of all functions $\rl^n \rightarrow \{\false,\true\}$. For any $S$ it is possible to select $\sof$ so that for any $g \in \stf$ the function $\lgk(g)$ is the smallest (minimum Kolmogorov complexity) single hidden layer neural network $f: \rl^a \rightarrow \{\false,\true\}$ in $\sof$ such that $\forall x \in S, f(x) = g(x)$.

\section{Discussion}

The category theoretic perspective on generalization that we introduce in this paper is fundamentally different from the traditional data science perspective. Intuitively, the traditional data science perspective is mean and percentile-focused whereas the category theoretic perspective is min and max-focused.
That is, traditional data science algorithms may have objectives like ``minimize total errors'' while the category theoretic algorithms we discuss in this work have objectives like ``minimize false positives subject to no false negatives on some set.'' As a result, algorithms built from the category theoretic perspective may behave more predictably, but can also be more sensitive to noise.




\bibliographystyle{ACM-Reference-Format}
\bibliography{generic}

\section{Appendix}
\subsection{Proof of Proposition \ref{supervisedpreorderproposition}}\label{proof:supervisedpreorderproposition}
\begin{proof}
We first need to show that $\lgk, \rgk$ are functors. For any $x_1 \leq x_2 \in \ib$ suppose that $\lgk(x_1) = \true$. Then $\exists x' \in \ib', x'  \leq x_1, K(x') = \true$. By transitivity we have $x' \leq x_2$, so:
\begin{align*}
\lgk(x_2) =\\
\begin{cases}
\true & \exists x' \in \ib', x' \leq x_2, K(x') = \true \\
\false  & \text{else}
\end{cases} = \\
\true
\end{align*}
and $\lgk$ is therefore a functor.

Next, for any $x_1 \leq x_2 \in \ib$ suppose that $\rgk(x_2) = \false$. Then $\exists x' \in \ib', x_2 \leq x', K(x') = \false$. By transitivity we have $x_1 \leq x'$, so:
\begin{align*}
\rgk(x_1) =\\
\begin{cases}
\false  & \exists x' \in \ib', x_1 \leq x', K(x') = \false \\
\true & \text{else}
\end{cases} =\\
\false
\end{align*}
and $\rgk$ is therefore a functor.

Next we will show that $\lgk$ is the left Kan extension of $K$ along $G$. If for some $x' \in \ib'$ we have that $K(x') = \true$ then:
\begin{align*}
\lgk(x') = 
\\
\begin{cases}
\true & \exists x'' \in \ib', x'' \leq x', K(x'') = \true \\
\false  & \text{else}
\end{cases} = 
\\
\true
\end{align*}
so we can conclude that $K \leq (\lgk \circ G)$. Now consider any other functor $M_L: \ib \rightarrow \{\false,\true\}$ such that $\forall x' \in \ib', K(x') \leq M_L(x')$. We must show that $\forall x \in \ib, \lgk(x) \leq M_L(x)$. For some $x \in \ib$ suppose $M_L(x)=\false$. Then since $M_L$ is a functor it must be that $\forall x' \in \ib', x' \leq x, M_L(x')=\false$. Since $K \leq (M_L \circ G)$ it must be that $\forall x' \in \ib', x' \leq x, K(x')=\false$. Therefore $\lgk(x)=\false$.

Next we will show that $\rgk$ is the right Kan extension of $K$ along $G$. If for some $x' \in \ib'$ we have that $K(x') = \false$ then:
\begin{align*}
\rgk(x') = \\
\begin{cases}
\false & \exists x'' \in \ib', x' \leq x'', K(x'') = \false \\
\true & \text{else}
\end{cases} =\\
\false
\end{align*}
so we can conclude that $(\rgk \circ G) \leq K$. Now consider any other functor $M_R: \ib \rightarrow \{\false,\true\}$ such that $\forall x' \in \ib', M_R(x') \leq K(x')$. We must show that $\forall x \in \ib, M_R(x) \leq \rgk(x)$. For some $x \in \ib$ suppose $M_R(x)=\true$. Then since $M_R$ is a functor it must be that $\forall x' \in \ib', x \leq x', M_R(x')=\true$. Since $(M_R \circ G) \leq K$ it must be that $\forall x' \in \ib', x \leq x', K(x')=\true$. Therefore $\rgk(x)=\true$.
\end{proof}


\subsection{Proof of Proposition \ref{optimizationproposition}}\label{proof:optimizationproposition}
\begin{proof}
First note that $l$ must be non-negative since each term can be expressed as $\max(0,\_)$. Next, suppose that $l(f) = 0$. Then it must be that for any $x_0,x_1 \in \ib'$ such that $K(x_0)=\false, K(x_1)=\true$ we have that $f(x_0) \leq f(x_1)$. As a result, for any $x\in \ib'$ there can only exist some $x' \in \ib'$ where $f(x) \leq f(x'), K(x') = \false$ when $K(x) = \false$. Similarly, there can only exist some $x' \in \ib'$ where $f(x') \leq f(x), K(x') = \true$ when $K(x) = \true$.  Therefore:
\begin{gather*}
    K(x)=(Lan_{f \circ G}K \circ f)(x)=(Ran_{f \circ G}K \circ f)(x)
\end{gather*}
\end{proof}

\subsection{Proof of Proposition \ref{proposition:maximally-refined}}\label{proof:maximally-refined}
\begin{proof}\label{proof:maximally-refined}
%
%
$\lgkl$ trivially acts as the identity on morphisms and underlying sets and preserves composition and identity so we simply need to show that when:
\begin{gather*}
    (X, d_X) \leq_{\db} (Y, d_{Y})
\end{gather*}
then
\begin{gather*}
    \lgkl(X, d_X) \leq_{\prtid} \lgkl(Y, d_{Y})
\end{gather*}
Suppose there exists some $x,x^{*} \in X$ in the same cluster in $\lgkl(X, d_X)$. Then by the definition of $\lgkl$ there must exist some sequence
\begin{gather*}
   (X_1, d_{X_1}), (X_2, d_{X_2}), \cdots, (X_n, d_{X_n}) \in \tb
\end{gather*}
where $x \in X_1, x^{*} \in X_n$ and each:
\begin{gather*}
    (X_i, d_{X_i}) \leq_{\db} (X, d_{X})
\end{gather*}
as well as some sequence
\begin{gather*}
    x_1, x_2, \cdots, x_{n-1}, \ \text{such that}\ 
    x_i \in X_i, x_i \in X_{i+1}
\end{gather*}
where the pair $(x, x_1)$ is in the same cluster in $K(X_1, d_{X_1})$, the pair $(x_{n-1}, x^{*})$ is in the same cluster in $K(X_n, d_{X_n})$, and for each $1 < i < n$ the pair $(x_{i-1}, x_{i})$ is in the same cluster in $K(X_{i}, d_{X_{i}})$.
Since it must be that each:
\begin{gather*}
    (X_i, d_{X_i}) \leq_{\db} (Y, d_{Y})
\end{gather*}
as well then by the definition of $\lgkl$ it must be that $x,x^{*}$ are in the same cluster in $\lgkl(Y, d_{Y})$.
\end{proof}


%
\subsection{Proof of Proposition \ref{proposition:maximally-coarse}}\label{proof:maximally-coarse}
\begin{proof}
$\rgkr$ trivially acts as the identity on morphisms and underlying sets and preserves composition and identity so we simply need to show that when:
\begin{gather*}
    (X, d_X) \leq_{\db} (Y, d_{Y})
\end{gather*}
then:
\begin{gather*}
    \rgkr(X, d_X) \leq_{\prtid} \rgkr(Y, d_{Y})
\end{gather*}

Suppose the points $x, x^{*} \in X$ are in the same cluster in $\rgkr(X, d_X)$. Then by the definition of $\rgkr$ there cannot be any $(X', d_{X'})$ in $\tb$ such that:
\begin{gather*}
    (X, d_X) \leq_{\db} (X', d_{X'})
\end{gather*}
and $x, x^{*}$ are in different clusters in $\rgkr(X', d_{X'})$. By transitivity this implies that there cannot be any $(X'', d_{X''})$ in $\tb$ such that:
\begin{gather*}
    (Y, d_Y) \leq_{\db} (X'', d_{X''})
\end{gather*}
and $x, x^{*}$ are in different clusters in $\rgkr(X'', d_{X''})$. By the definition of $\rgkr$ the points $x, x^{*}$ must therefore be in the same cluster in $\rgkr(Y, d_Y)$.
\end{proof}
%


%
\subsection{Proof of Proposition \ref{equaltok}}\label{proof:equaltok}
\begin{proof}
%
Since each of:
\begin{align*}
    &\iota \circ F: \db \rightarrow \prt \\
    &\iota \circ \rgkr: \db \rightarrow \prt \\
    &\iota \circ \lgkl: \db \rightarrow \prt
\end{align*}
are $\db$-clustering functors we simply need to prove that all three functors generate the same partition of $X$ for any input $(X, d_X) \in \tb$.

Consider some $(X, d_X) \in \tb$ and two points $x, x^{*} \in X$. Suppose $x, x^{*}$ are in different clusters in
\begin{gather*}
    K(X, d_X) = F(X, d_X)
\end{gather*}
. Then since $F$ is a $\db$-clustering functor it must be that for any sequence
\begin{gather*}
  (X_1, d_{X_1}), (X_2, d_{X_2}), \cdots, (X_n, d_{X_n}) \in \tb
\end{gather*}
where $x \in X_1, x^{*} \in X_n$ and each:
\begin{gather*}
    (X_i, d_{X_i}) \leq_{\db} (X, d_{X})
\end{gather*}
and any sequence
\begin{gather*}
    x_1, x_2, \cdots, x_{n-1}, \ \text{such that}\ 
    x_i \in X_i, x_i \in X_{i+1}
\end{gather*}
one of the following must be true:
\begin{itemize}
    \item The pair  $(x, x_1)$ are in different clusters in $F(X_1, d_{X_1})$
    \item The pair $(x_{n-1}, x^{*})$ are in different clusters in $F(X_n, d_{X_n})$
    \item For some $1 < i < n$ the pair $(x_{i-1}, x_{i})$ are in different clusters in $F(X_{i}, d_{X_{i}})$
\end{itemize}
This implies that in  $\lgkl(X, d_X)$ the points $x, x^{*}$ must be in different clusters. Similarly, since $(X, d_X) \leq_{\db} (X, d_X)$, by Proposition \ref{proposition:maximally-coarse} it must be that $x, x^{*}$ are in different clusters in $\rgkr(X, d_X)$.



Now suppose $x, x^{*}$ are in the same cluster in
\begin{gather*}
    K(X, d_X) = F(X, d_X)
\end{gather*}.
Since $(X, d_X) \leq_{\db} (X, d_X)$, by Proposition \ref{proposition:maximally-refined} it must be that $x,x^{*}$ are in the same cluster in $\lgkl(X, d_X)$. Similarly, since $F$ is a $\db$-clustering functor there cannot exist any metric space $(X', d_{X'}) \in \tb$ where:
\begin{gather*}
    (X, d_X) \leq_{\db} (X', d_{X'})
\end{gather*}
and $x,x^{*}$ are in different clusters in
\begin{gather*}
    K(X', d_{X'}) = F(X', d_{X'})
\end{gather*}.
Therefore $x,x^{*}$ are in the same cluster in $\rgkr(X, d_X)$.
\end{proof}


\subsection{Proof of Proposition \ref{clusteringproposition}}\label{proof:clusteringproposition}
\begin{proof}
%
%

To start, note that Proposition \ref{equaltok} implies that for any $(X, d_X) \in \tb$ we have:
\begin{align*}
    \lgkl(X, d_X) = K(X, d_X) = \rgkr(X, d_X)
\end{align*}
By the definition of $K_L, K_R$ we can therefore conclude that for any $(X, d_X) \in \db$ we have:
\begin{align*}
    & K_L(X, d_X) \leq_{\prtid} \lgkl(X, d_X) \\
    & \rgkr(X, d_X) \leq_{\prtid} K_R(X, d_X)
\end{align*}
Next, consider any functor $M_{L}: \db \rightarrow \prtid$ such that for all $(X, d_X) \in \db$ we have:
\begin{gather*}
    K_L(X, d_X) \leq_{\prtid} (M_L \circ G)(X, d_X)
\end{gather*}
We must show that for any $(X, d_X) \in \db$ we have:
\begin{gather*}
    \lgkl(X, d_X) \leq_{\prtid} M_L(X, d_X)
\end{gather*}
To start, note that for any $x, x^{*} \in X$ that are in the same cluster in $\lgkl(X, d_X)$ by the definition of $\lgkl$ there must exist some sequence:
\begin{gather*}
   (X_1, d_{X_1}), (X_2, d_{X_2}), \cdots, (X_n, d_{X_n}) \in \tb
\end{gather*}
where $x \in X_1, x^{*} \in X_n$ and each:
\begin{gather*}
    (X_i, d_{X_i}) \leq_{\db} (X, d_{X})
\end{gather*}
as well as some sequence
\begin{gather*}
    x_1, x_2, \cdots, x_{n-1}, \ \text{such that}\ 
    x_i \in X_i, x_i \in X_{i+1}
\end{gather*}
where the pair $(x, x_1)$ is in the same cluster in $K_L(X_1, d_{X_1})$, the pair $(x_{n-1}, x^{*})$ is in the same cluster in $K_L(X_n, d_{X_n})$, and for each $1 < i < n$ the pair $(x_{i-1}, x_{i})$ is in the same cluster in $K_L(X_{i}, d_{X_{i}})$.
Now since for each $(X_{i}, d_{X_{i}})$ in this sequence we have that:
\begin{gather*}
    K_L(X_{i}, d_{X_{i}}) \leq_{\prtid}  M_L(X_{i}, d_{X_{i}})
\end{gather*}
it must be that the pair $(x, x_1)$ is in the same cluster in $M_L(X_1, d_{X_1})$, the pair $(x_{n-1}, x^{*})$ is in the same cluster in $M_L(X_n, d_{X_n})$, and for each $1 < i < n$ the pair $(x_{i-1}, x_{i})$ is in the same cluster in $M_L(X_{i}, d_{X_{i}})$.

Since $M_L$ is a functor it must therefore be that the pair $x, x^{*}$ is in the same cluster in $M_L(X, d_X)$ and therefore:
\begin{gather*}
    \lgkl(X, d_X) \leq_{\prtid} M_L(X, d_X)
\end{gather*}.

Next, consider any functor $M_{R}: \db \rightarrow \prtid$ such that for all $(X, d_X)$ in $\db$:
\begin{gather*}
    (M_R \circ G)(X, d_X) \leq_{\prtid} K_R(X, d_X)
\end{gather*}
We must show that for any $(X, d_X)$ in $\db$ we have:
\begin{gather*}
    M_R(X, d_X) \leq_{\prtid} \rgkr(X, d_X)
\end{gather*}
To start, note that for any $x, x^{*} \in X$ such that $x, x^{*}$ are not in the same cluster in $\rgkr(X, d_X)$ by the definition of $\rgkr$ there must exist some:
\begin{gather*}
    (X', d_{X'}) \in Ob(\db),
    (X, d_{X}) \leq_{\db} (X', d_{X'})
\end{gather*}
where $x, x^{*}$ are not in the same cluster in $K_R(X', d_{X'})$. Now since:
\begin{gather*}
    M_R(X', d_{X'}) \leq_{\prtid} K_R(X', d_{X'})
\end{gather*}
it must be that $x, x^{*}$ are not in the same cluster in $M_R(X', d_{X'})$. Since $M_R$ is a functor we have:
\begin{gather*}
    M_R(X, d_{X}) \leq_{\prtid} M_R(X', d_{X'})
\end{gather*}
so $x, x^{*}$ are also not in the same cluster in $M_R(X, d_{X})$ and therefore:
\begin{gather*}
    M_R(X, d_X) \leq_{\prtid} \rgkr(X, d_X)
\end{gather*}.
\end{proof}


\subsection{Proof of Proposition \ref{proposition:upper-antichain}}\label{proof:upper-antichain}
\begin{proof}

Suppose $f_1,f_2$ are in the upper antichain of $S^{*}_f \subseteq S_f$ and $f_1 \leq f_2$. Then since 
\begin{gather*}
    \not\exists f^{*}_1 \in S^{*}_f, f_1 < f^{*}_1
\end{gather*}
it must be that $f_1 = f_2$ and we can conclude that the upper antichain is an antichain.

Next, for any function $f \in S^{*}_f$ consider the set $\{f^{*} \in S^{*}_f, f < f^{*}\}$. Since $S_f$ is finite this set must have finite size. If this set is empty then $f$ is in the upper antichain of $S^{*}_f$. If this set has size $n$ then for any $f^{*}$ in this set the set $\{f^{**} \in S^{*}_f, f^{*} < f^{**}\}$ must have size strictly smaller than $n$. We can therefore conclude by induction that the upper antichain of $S^{*}_f$ contains at least one function $f^{*}$ where $f \leq f^{*}$.
\end{proof}
%

\subsection{Proof of Proposition \ref{proposition:dab-dbb-dcb-categories}}\label{proof:dab-dbb-dcb-categories}
\begin{proof}

~\\$\dcb$ ~\\
We trivially have $X_f \leq X_f$ in $\dcb$. To see that $\leq$ is transitive in $\dcb$ simply note that if $X_{f_1} \leq X_{f_2}$ and $X_{f_2} \leq X_{f_3}$ then for $f_1 \in X_{f_1}$ there must exist $f_2 \in X_{f_2}, f_1 \leq f_2$, which implies that there must exist $f_3 \in X_{f_3}, f_1 \leq f_2 \leq f_3$.

~\\$\dbb$ ~\\
We trivially have $U\leq U$ in $\dbb$. To see that $\leq$ is transitive in $\dbb$ simply note that if $U_1 \leq U_2$ and $U_2 \leq U_3$ in $\dbb$ then for 
$(x, y_3) \in U_3$ there must exist $(x, y_2) \in U_2, y_2 \leq y_3$ which implies that there must exist $(x, y_1) \in U_1, y_1 \leq y_2 \leq y_3$.
\end{proof}
%

\subsection{Proof of Proposition \ref{proposition:meta-learn-k}}\label{proof:meta-learn-k}
\begin{proof}
%
%
To start, note that $K$ maps objects in $\dab$ to objects in $\dcb$ since the upper antichain of $S_K(U)$ must be an antichain in $S_f$ by Proposition \ref{proposition:upper-antichain}. 

Next, we need to show that if $U \leq U'$ then $K(U) \leq K(U')$. For any $x,y' \in U'$ it must be that there exists $(x, y) \in U$ where $y \leq y'$, so if $f \in K(U)$ then by the definition of $K$ we have $f(x) \leq y \leq y'$. Therefore $f \in S_K(U')$, so by Proposition \ref{proposition:upper-antichain} $K(U')$ contains $f'$ where $f \leq f'$. Therefore $K(U) \leq K(U')$.
\end{proof}


\subsection{Proof of Proposition \ref{factorizedfunctionskanproposition}}\label{proof:factorizedfunctionskanproposition}
\begin{proof}

We first need to show that $\lgk$ is a functor. Note that $\lgk$ maps objects in $\dbb$ to objects in $\dcb$ since the upper antichain of $S_L(U)$ must be an antichain in $S_f$.

Next, suppose $U_1 \leq U_2$ and that $f \in \lgk(U_1)$. Consider the set of all $U' \in \dab$ where $U' \leq U_1$. Since $U_1 \leq U_2$ this is a subset of the set of all $U' \in \dab$ where $U' \leq U_2$.
Since $S_L(U_1)$ is defined to be a union of the elements in the set we have that $S_L(U_1) \subseteq S_L(U_2)$. Since $f \in \lgk(U_1)$ implies that $f \in S_L(U_1)$ this implies that $f \in S_L(U_2)$ as well.
Proposition \ref{proposition:upper-antichain} then implies that there must exist $f' \in \lgk(U_2)$ where $f \leq f'$ and therefore $\lgk(U_1) \leq \lgk(U_2)$.

Next, we will show that $\lgk$ is the left Kan extension of $K$ along $G$.
\begin{itemize}
    \item Consider some $U \in \dab$ and $f \in K(U)$. Since $U \leq U$ we have by the definition of $S_L$ that $f \in S_L(U)$. Proposition \ref{proposition:upper-antichain} then implies that $\exists f' \in \lgk(U)$ such that $f \leq f'$. This implies that $K \leq \lgk \circ G$.
    \item Now consider any functor $M_{L}: \dbb \rightarrow \dcb$ such that $K \leq (M_L \circ G)$. We must show that $\lgk \leq M_L$. For some $U \in \dbb$ suppose $f \in \lgk(U)$. By the definition of $S_L$ there must exist some $U' \in \dab$ where $U' \leq U$ such that $f \in K(U')$. Since $K(U') \leq M_L(U')$ there must exist some $f' \in M_L(U')$ where $f \leq f'$. Since $M_L$ is a functor we have $M_L(U') \leq M_L(U)$ which implies that there must exist some $f^{*} \in M_R(U)$ where $f \leq f' \leq f^{*}$. Therefore $\lgk \leq M_L$.
\end{itemize}

Next, we need to show that $\rgk$ is a functor. Note that $\rgk$ maps objects in $\dbb$ to objects in $\dcb$ since the upper antichain of $S_R(U)$ must be an antichain in $S_f$. 
Next, suppose $U_1 \leq U_2$ and that $f \in \rgk(U_1)$. Consider the set of all $U' \in \dab$ where $U_2 \leq U'$. Since $U_1 \leq U_2$ this is a subset of the set of all $U' \in \dab$ where $U_1 \leq U'$.
Therefore by the definition of $S_R$ we have that $S_R(U_1) \subseteq S_R(U_2)$. Since $f \in \rgk(U_1)$ implies that $f \in S_R(U_1)$ this implies that $f \in S_R(U_2)$ as well. Proposition \ref{proposition:upper-antichain} then implies that there must exist $f' \in \rgk(U_2)$ where $f \leq f'$ and therefore $\rgk(U_1) \leq \rgk(U_2)$.

Next, we will show that $\rgk$ is the right Kan extension of $K$ along $G$.
\begin{itemize}
    \item For $U \in \dab$ since $U \leq U$ we have that when $f \in S_R(U)$ we have by the definition of $S_R$ that $\exists f' \in K(U)$ such that $f \leq f'$. Since $\rgk(U)$ is a subset of $S_R(U)$ this implies that $\rgk \circ G \leq K$.
    \item Now consider any functor $M_{R}: \dbb \rightarrow \dcb$ such that $(M_R \circ G) \leq K$. We must show that $M_R \leq \rgk$. For some $U \in \dbb$ suppose $f \in M_R(U)$. Since $M_R$ is a functor it must be that for all $U' \in \dab$ where $U \leq U'$ we have that $M_R(U) \leq M_R(U')$ and therefore $\exists f'_{M_R} \in M_R(U'), f \leq f'_{M_R}$. Since $(M_R \circ G) \leq K$ this implies that for all $U' \in \dab$ where $U \leq U'$ we have that $\exists f'_{K} \in K(U'), f \leq f'_{M_R} \leq f'_{K}$. By the definition of $S_R$ this implies that $f \in S_R(U)$. Proposition \ref{proposition:upper-antichain} therefore implies that there exists $f'_{R} \in \rgk(U)$ such that $f \leq f'_{R}$, and therefore $M_R(U) \leq \rgk(U)$. 
\end{itemize}
\end{proof}


\subsection{Proof of Proposition \ref{proposition:minimum-kolmogorov-subset}}
\begin{proof}\label{proof:minimum-kolmogorov-subset}
For any function $f \in S_f$ there must exist some $f_c = \min_{\leq_c} \{f' \ |\ f' \in S_{f}, f' =_{S} f \}$ since $\{f' \ |\ f' \in S_{f}, f' =_{S} f \}$ is a nonempty finite total $\leq_{c}$-order. Therefore we can define a map that sends each $f \in S_f$ to $f_c$. Define $S_{f_c}$ to be the image of this map. 

Since this map will send all $f \in S_f$ in the same $=_{S}$ equivalence class to the same function in that  $=_{S}$ equivalence class, $S_{f_c}$ contains exactly one function $f_c$ where $f =_{S} f_c$. This function $f_c$ satisfies $f_c \leq_{c} f$.
\end{proof}


%
\subsection{Proof of Proposition \ref{approxfunctionkanproposition}}\label{proof:approxfunctionkanproposition}
\begin{proof}

We first show that $\lgk$ is a functor when it exists. Since $\fbb,\fcb$ are preorders we simply need to show that when $f_1 \lqs f_2$ then $\lgk(f_1) \lqs \lgk(f_2)$. Since $f_2 \lqs \lgk(f_2)$ by the definition of the minimal $S$-overapproximation of $f_2$ we have that $f_1 \lqs \lgk(f_2)$. Then $\lgk(f_1) \lqs \lgk(f_2)$ by the definition of the minimal $S$-overapproximation of $f_1$.

We next show that $\rgk$ is a functor when it exists. Since $\fbb,\fcb$ are preorders we simply need to show that when $f_1 \lqs f_2$ then $\rgk(f_1) \lqs \rgk(f_2)$. Since $\rgk(f_1) \lqs f_1$ by the definition of the maximal $S$-underapproximation of $f_1$ we have that $\rgk(f_1) \lqs f_2$. Then $\rgk(f_1) \lqs \rgk(f_2)$ by the definition of the maximal $S$-underapproximation of $f_2$.

Next, we will show that $\lgk$ and $\rgk$ are respectively the left and right Kan extensions when they exist. First, by Proposition \ref{proposition:maximal-and-minimal} if $f \in \sofc$ then $f$ must be both the minimal $S$-overapproximation and maximal $S$-underapproximation of $f$. Therefore we have:
\begin{gather*}
    K(f) = \lgk(f) = \rgk(f)
\end{gather*}


Next, consider any functor $M_L: \fbb \rightarrow \fcb$ such that $\forall f \in \sofc, K(f) \lqs M_L(f)$. Since $f =_{S} K(f)$ this implies $f \lqs M_L(f)$ so by the definition of the minimal $S$-overapproximation $\lgk(f) \lqs M_L(f)$.

Next, consider any functor $M_R: \fbb \rightarrow \fcb$ such that $\forall f \in \sofc, M_R(f) \lqs K(f)$. Since $K(f) =_{S} f$ this implies $M_R(f) \lqs f$ so by the definition of the maximal $S$-underapproximation $M_R(f) \leq \rgk(f)$.
\end{proof}

\end{document}